\documentclass{article}

\usepackage{iclr2027_conference,times}
\usepackage[utf8]{inputenc}
\usepackage[T1]{fontenc}
\usepackage{booktabs}
\usepackage{graphicx}
\usepackage{amsmath}
\usepackage{amssymb}
\usepackage{amsthm}
\usepackage{algorithm}
\usepackage{float}
\usepackage{microtype}
\usepackage[table]{xcolor}
\usepackage{tcolorbox}
\usepackage{pgfplots}
\pgfplotsset{compat=1.18}
\usepackage{placeins}
\usepackage{url}
\usepackage{hyperref}

\newtheorem{proposition}{Proposition}[section]
\definecolor{propositionpurple}{HTML}{8A62A6}
\definecolor{algorithmcomment}{HTML}{347C7D}
\definecolor{algorithmfunction}{HTML}{D63384}
\definecolor{rankblue}{HTML}{3182BD}
\definecolor{samplegreen}{HTML}{4A8F2A}
\definecolor{pilotsink}{HTML}{2A78D6}
\definecolor{pilotkl}{HTML}{EB6834}
\definecolor{pilotmmd}{HTML}{1BAF7A}
\newtcolorbox{purplepropositionbox}{
  colback=propositionpurple!7!white,
  colframe=propositionpurple,
  boxrule=0.9pt,
  arc=2pt,
  boxsep=2pt,
  left=2pt,
  right=2pt,
  top=2pt,
  bottom=2pt,
  before skip=8pt,
  after skip=8pt
}

\title{One-Step Generation via \\ Riemannian Wasserstein Gradient Flows}

\author{%
David Li$^{1}$\thanks{Work done during an internship at KAIST. Correspondence to: David Li \texttt{<david.li@mbzuai.ac.ae>}.} \quad Chanhyuk Lee$^{2}$ \quad Jaehoon Yoo$^{2}$ \quad
Nikita Gushchin$^{3,5}$ \quad Floor Eijkelboom$^{4}$ \\
\bfseries Eric Moulines$^{1,6}$ \quad Maxim Panov$^{1}$ \quad Alexander Korotin$^{3,5}$ \quad
Seunghoon Hong$^{2}$ \quad Jinwoo Kim$^{2}$ \\[4pt]
{\normalfont $^{1}$Mohamed bin Zayed University of Artificial Intelligence} \\
{\normalfont $^{2}$Korea Advanced Institute of Science and Technology (KAIST)} \\
{\normalfont $^{3}$Applied AI Institute} \\
{\normalfont $^{4}$University of Amsterdam} \\
{\normalfont $^{5}$AXXX} \\
{\normalfont $^{6}$EPITA}%
}

\iclrfinalcopy

\begin{document}

\maketitle

\begin{abstract}
Recently, Drifting Models and Wasserstein Gradient Flows have attracted substantial attention because they move iterative distributional refinement to training and amortize it into a generator, enabling fast inference. However, existing formulations have been developed largely for continuous Euclidean domains, such as image spaces, where particles admit unconstrained additive updates. On constrained spaces, these updates can leave the valid domain or ignore its geometry, making them unsuitable targets for training. Recent work has adapted updates to these spaces, but has focused on particular fields or offered limited empirical comparison. We derive and compare several geometry-aware fields within a common training framework for one-step generators. We test the method on data with different structures and obtain competitive one-step results in each setting. The best-performing field varies by task, showing why the choice of objective matters in practice.
\end{abstract}

\begin{figure}[ht]
  \centering
  \includegraphics[width=\linewidth]{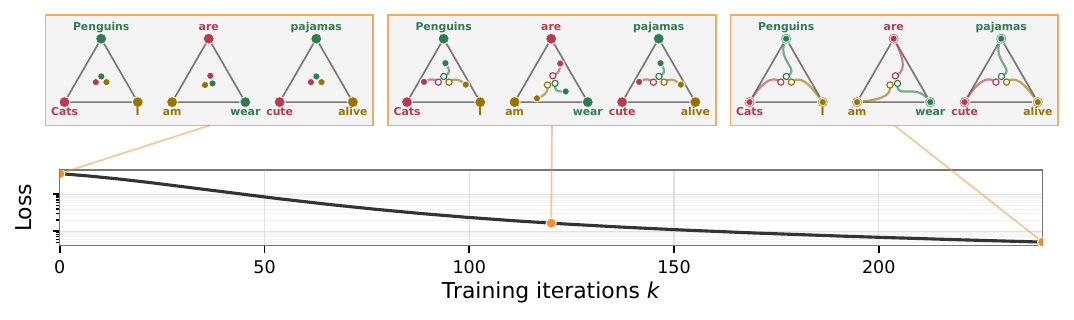}
  \caption{\textbf{Fixed-$z$ generation on a product of
  simplices.} Outputs lie in
  $\mathcal M =(\Delta^2)^3$, with one simplex per word position. Each color
  tracks $f_{\theta_k}(z_i)$ for a fixed $z_i$ as $\theta_k$ is updated,
  ending at
  \textcolor[HTML]{B83A4B}{\emph{Cats are cute}},
  \textcolor[HTML]{2E7D4F}{\emph{Penguins wear pajamas}}, and
  \textcolor[HTML]{9A7200}{\emph{I am alive}}.}
  \label{fig:simplex-wgf-overview}
\end{figure}

\section{Introduction}
\label{sec:introduction}

Diffusion and flow-based generators achieve strong sample quality by gradually
transporting a simple reference distribution toward the data
distribution~\citep{ho2020denoising,song2021scorebased,lipman2023flow}.  This
transport is often realized through many sequential updates, making inference
expensive.  Drifting Models offer a complementary strategy: they update the
model distribution iteratively during training and amortize the resulting
moves into a generator that can be evaluated directly at inference
time~\citep{deng2026drifting}.  Recent works connect such updates to
Wasserstein gradient flows (WGFs), making it possible to derive the drift from
an explicit objective
~\citep{gretton2026interpretation,han2026one}.  Most such formulations,
however, rely on Euclidean velocities and additive updates.
Many data types and representations live naturally on curved or constrained
spaces, where Euclidean updates need not follow the intrinsic geometry or even
remain in the state space.

Classical WGF theory extends beyond Euclidean spaces
~\citep{ambrosio2008gradient,erbar2010heat}, and recent generative and drifting
methods construct dynamics on Riemannian manifolds
~\citep{chen2024riemannian,cao2026gradient,esteban2026kernel}.  However, they typically focus on a single objective or evaluate alternatives in limited settings.  We address this gap by deriving a common formulation of objective-driven updates within a common Riemannian framework and comparing the resulting one-step generators across diverse settings.  We call the resulting framework \emph{Riemannian Wasserstein Gradient Flows} (R-WGF).

We start from an explicit objective measuring the discrepancy between the
model and target distributions.  Its first variation describes how this
objective changes, and the Riemannian metric converts this information into an
intrinsic particle velocity.  This velocity defines a continuous flow of the
model distribution.  Under explicit regularity assumptions, we show that the
flow decreases the chosen objective.  We then approximate the flow with
geometry-compatible finite updates and, for the objectives considered,
estimate these updates from samples and amortize them into a neural generator.

Our contributions are:
\begin{itemize}
  \item We formulate objective-driven drifting through Riemannian Wasserstein
  calculus, making explicit how a law-level functional and a ground metric
  determine the particle velocity.

  \item Under explicit regularity assumptions, we establish the population
  energy-dissipation identity.  We also provide geometry-compatible finite
  updates through Riemannian retractions and prove their local first-order
  consistency with the exponential map.

  \item We apply our Riemannian Wasserstein gradient-flow framework to three distinct domains, demonstrating one-step generation on spherical geospatial data, Binary-MNIST and discretized CIFAR-10 images, and LM1B text.
\end{itemize}


\section{Background}
\label{sec:background}

\subsection{Riemannian geometry}
\label{sec:background:riemannian}

Let $(\mathcal M,\mathsf g)$ be a smooth connected Riemannian manifold.  At each point $x\in\mathcal M$,
the tangent space $T_x\mathcal M$ contains the directions in which a sample
can move.  The metric $\mathsf g_x$ defines the inner product over $T_x\mathcal M$ denoted as
$\mathsf g_x(u,v)$ and norm $\|u\|_{\mathsf g,x}$ of tangent vectors
$u,v\in T_x\mathcal M$.  We write $\mathrm{vol}_{\mathsf g}$ for the Riemannian volume measure
and $\operatorname{div}_{\mathsf g}$ for the corresponding divergence.

On a general manifold, a tangent vector cannot in general be added to a
point.  Instead, we follow a geodesic, a constant-speed curve whose short
segments are shortest paths under the metric $\mathsf g$. The length of such a segment gives the geodesic distance $d_{\mathcal M}$ between its endpoints. For a starting
point $x\in\mathcal M$ and an initial velocity $v\in T_x\mathcal M$, let
$\gamma_{x,v}(t)$ denote this geodesic, with $\gamma_{x,v}(0)=x$ and
$\dot\gamma_{x,v}(0)=v$.  The exponential map turns the tangent vector into
a point on the manifold:
$\operatorname{Exp}_x(v):=\gamma_{x,v}(1)$ for sufficiently small $v$.
Thus $\operatorname{Exp}_x(sv)=\gamma_{x,v}(s)$ for sufficiently small $s$.

In the Wasserstein gradient flow below, a differentiable scalar field
$\phi:\mathcal M\to\mathbb R$ will act as the potential induced by the
distributional objective. The differential $d\phi_x:T_x\mathcal M\to\mathbb R$ is a linear functional on $T_x\mathcal M$ and describes the local change of
$\phi$ in every tangent direction.
The metric $\mathsf g_x$ identifies it with a unique tangent
vector.  This vector is the Riemannian gradient
$\operatorname{grad}_{\mathsf g}\phi(x)\in T_x\mathcal M$ satisfying
\begin{equation}
  \mathsf g_x\bigl(\operatorname{grad}_{\mathsf g}\phi(x),u\bigr)
  =d\phi_x[u]
  \qquad\text{for every }u\in T_x\mathcal M.
  \label{eq:riemannian-gradient-definition}
\end{equation}

Euclidean space is a special case: for $\mathcal M=\mathbb R^D$ with the
standard metric, $T_x\mathcal M=\mathbb R^D$,
$\mathsf g_x(u,v)=u^\top v$, and $\|u\|_{\mathsf g,x}$ is the usual
Euclidean norm.  In this case, the Riemannian gradient is simply the ordinary
gradient,
$\operatorname{grad}_{\mathsf g}\phi(x)=\nabla\phi(x)$.
Since $T_x\mathcal M=\mathbb R^D$, tangent vectors can also be added directly
to points, so that $\operatorname{Exp}_x(sv) = x+sv$.

\subsection{Drifting Models}
\label{sec:background:drifting}

Drifting Models were originally formulated for samples in Euclidean
space~\citep{deng2026drifting}.  Let $p$ be the data density and $p_0$ the
latent density.  Given $z\sim p_0$, the generator produces
$x=f_\theta(z)$.  We denote the density of generated samples by
$q_\theta=(f_\theta)_\#p_0$.  A sample-based drift field
$\widehat V_{p,q_\theta}$, estimated from real and generated samples, moves
$x$ toward a refined target.  The target and the generator loss are
\begin{equation}
  \widetilde x
  =x+\eta\widehat V_{p,q_\theta}(x),
  \qquad
  \mathcal L(\theta)
  :=\mathbb E_{z\sim p_0}\!\left[
    \left\|f_\theta(z)-\operatorname{sg}[\widetilde x]\right\|^2
  \right].
  \label{eq:background-drifting}
\end{equation}
Here $\eta>0$ is the step size and $\operatorname{sg}$ denotes
stop-gradient, so $\widetilde x$ is fixed while optimizing
$\mathcal L$.  Repeating these training stages transfers the sequence of
sample refinements into the generator. Therefore, at inference, sampling requires only
one evaluation of $f_\theta$.

Both parts of Eq.~\eqref{eq:background-drifting} use Euclidean geometry: $\widetilde x$ is obtained by vector addition, and the generator is fitted with a
squared Euclidean distance.  The construction explains how to amortize a
given drift, but it does not determine which population velocity
$\widehat V_{p,q_\theta}$ should approximate.  Wasserstein gradient flows
answer this question by deriving the velocity from an explicit functional of
the model distribution.

\subsection{Wasserstein gradient flows}
\label{sec:background:wgf}

Let $p,q$ be continuous target and model
probability densities, so that each integrates to one and has finite second
moment.
A Wasserstein gradient flow evolves $q$ by steepest descent of a
distributional objective $\mathcal F_p$ in the quadratic Wasserstein
geometry~\citep{jordan1998variational,otto2001geometry,
ambrosio2008gradient}.  We assume that the objective identifies the target:
\begin{equation}
  \operatorname*{arg\,min}_q\mathcal F_p(q)=\{p\}.
  \label{eq:background-functional}
\end{equation}
For an admissible perturbation $h$ with
$\int_{\mathbb R^D}h(x)\,dx=0$ and $q+sh$ a probability density for all
sufficiently small $s\geq0$, the first variation
$\psi_{p,q}:=\delta\mathcal F_p(q)/\delta q$ is defined by
\begin{equation}
  \left.\frac{\mathrm d}{\mathrm ds}
  \mathcal F_p(q+s h)\right|_{s=0^+}
  =\int_{\mathbb R^D}\psi_{p,q}(x)h(x)\,dx.
  \label{eq:background-first-variation}
\end{equation}
Thus $\psi_{p,q}(x)$ measures how the objective changes when the model
density is perturbed near $x$.

If the first variation is sufficiently regular, the WGF in Euclidean space
can be characterized as
\begin{equation}
  V_{p,q}(x)=-\nabla\psi_{p,q}(x),
  \qquad
  \partial_tq_t+\nabla\!\cdot\!\left(q_tV_{p,q_t}\right)=0.
  \label{eq:background-wgf}
\end{equation}
Here $\mathcal F_p$ specifies what should decrease, while $V_{p,q}$ specifies
how samples move.  For a sufficiently regular solution, the exact flow
satisfies the dissipation identity
\begin{equation}
  \frac{\mathrm d}{\mathrm dt}\mathcal F_p(q_t)
  =-\int_{\mathbb R^D}
  \left\|\nabla\psi_{p,q_t}(x)\right\|^2q_t(x)\,dx
  \leq 0.
  \label{eq:background-euclidean-dissipation}
\end{equation}
Thus the population objective cannot increase along the WGF.  A forward
Euler step is
\begin{equation}
  \widetilde x
  =x+\eta V_{p,q}(x)
  =x-\eta\nabla\psi_{p,q}(x).
  \label{eq:background-wgf-step}
\end{equation}
Taking $q=q_\theta$ and replacing $V_{p,q_\theta}$ by its sample estimate
$\widehat V_{p,q_\theta}$ recovers the teacher update in
Eq.~\eqref{eq:background-drifting}; the MSE loss then amortizes this step into
the generator.  Thus an objective-driven Drifting Model can be viewed as an
amortized Euclidean WGF step.  An arbitrary heuristic drift, however, need
not correspond to a law-level functional and so does not automatically
inherit this dissipation guarantee.

Wasserstein gradient flows on Riemannian manifolds are developed in
Section~\ref{sec:method}.  Recent WGF interpretations of drifting and
objective-specific fields are reviewed in Section~\ref{sec:related-work}.

\section{Riemannian Wasserstein Gradient Flows}
\label{sec:method}

We work on the Riemannian sample space $(\mathcal M,\mathsf g)$ introduced in
Section~\ref{sec:background:riemannian}, with geodesic distance
$d_{\mathcal M}$ and volume element $d\mathrm{vol}_{\mathsf g}$.  Let
$\mathcal P_2(\mathcal M)$ denote the space of probability distributions on
$\mathcal M$ with finite second moments, equipped with the quadratic
Wasserstein distance $W_{2,\mathsf g}$ induced by $d_{\mathcal M}$.  Throughout
this section, $p,q$ denote the target and model
distributions, respectively.  For fixed $p$, consider a law-level objective
whose minimum is attained at $q=p$:
\begin{equation*}
  \mathcal F_p:\mathcal P_2(\mathcal M)\to\mathbb R,
  \qquad
  \{p\}=\operatorname*{arg\,min}_{q}\mathcal F_p(q).
\end{equation*}

Our goal is to learn a generative model whose output distribution
$q$ approximates the target $p$ while retaining fast sampling.  Motivated by
promising recent results in WGF-based generative modeling
~\citep{gretton2026interpretation,han2026one}, we follow this paradigm by
decreasing $\mathcal F_p(q)$.  Because Euclidean updates need not respect
curved or constrained sample spaces, we formulate the flow intrinsically on a
Riemannian manifold, whose metric converts the first variation of
$\mathcal F_p$ into sample refinements.  We iterate these refinements during
training and amortize them into the generator, leaving inference to a single
evaluation.

The section proceeds in four steps.
First, we convert the first variation of $\mathcal F_p$ into an
intrinsic drift of individual samples.  Second, we establish the associated
population energy-dissipation identity.  Third, we instantiate
$\mathcal F_p$ with concrete objectives and derive the corresponding
intrinsic drifts.
Fourth, we replace the ideal flow by geometry-compatible finite steps and
amortize those steps into a generator.
The construction is independent of a particular ground geometry. Detailed
Euclidean, spherical, and categorical realizations are collected in
Appendix~\ref{app:concrete-geometries}.

\subsection{From a law functional to a sample-space drift}
\label{sec:method:flow}

If it exists, a first variation of $\mathcal F_p$ at $q$ is a function
$\psi_{p,q}:\mathcal M\to\mathbb R$ satisfying, for every density perturbation
$h$ with $\int_{\mathcal M}h(x)\,d\mathrm{vol}_{\mathsf g}(x)=0$,
\begin{equation}
  \left.\frac{d}{ds}
  \mathcal F_p(q+sh)
  \right|_{s=0^+}
  =\int_{\mathcal M}\psi_{p,q}(x)h(x)\,
  d\mathrm{vol}_{\mathsf g}(x).
  \label{eq:first-variation-definition}
\end{equation}
For $\mathcal M=\mathbb R^D$ and
$d\mathrm{vol}_{\mathsf g}(x)=dx$, this is exactly
Eq.~\eqref{eq:background-first-variation}.

To relate the first variation to sample motion, let $q_s^u$ be the density
obtained by pushing $q$ forward under
$x\mapsto\operatorname{Exp}_x(su(x))$, where $u$ is a smooth, compactly
supported tangent field.  We assume the transport chain rule
\begin{equation}
  \left.\frac{d}{ds}\mathcal F_p(q_s^u)\right|_{s=0}
  =\int_{\mathcal M}d(\psi_{p,q})_x[u(x)]q(x)\,
    d\mathrm{vol}_{\mathsf g}(x).
  \label{eq:transport-chain-rule}
\end{equation}
Equation~\eqref{eq:first-variation-definition} alone does not imply this rule.

Equip tangent fields with the $q$-weighted inner product
\begin{equation}
  \langle u,w\rangle_{q,\mathsf g}
  :=\int_{\mathcal M}\mathsf g_x(u(x),w(x))q(x)\,
  d\mathrm{vol}_{\mathsf g}(x),
  \qquad
  \|u\|_{q,\mathsf g}^2:=\langle u,u\rangle_{q,\mathsf g}.
  \label{eq:law-tangent-norm}
\end{equation}
Our goal is to select, among tangent velocity fields $u$ with finite
$q$-weighted norm, the sample-space drift that decreases $\mathcal F_p$ most
rapidly at the current density $q$.  Equation~\eqref{eq:transport-chain-rule}
gives the instantaneous change produced by $u$, so it is natural to minimize
this rate over velocity fields.  This minimization alone is ill posed: the
rate is linear in $u$, and whenever it is negative, rescaling the same
direction makes it arbitrarily negative.  We therefore balance the decrease
against the kinetic cost $\tfrac12\|u\|_{q,\mathsf g}^2$, which fixes the scale
using the tangent geometry of quadratic Wasserstein space.  The resulting
regularized problem is the infinite-dimensional analogue of
$\operatorname*{arg\,min}_v
\{\langle\nabla f,v\rangle+\tfrac12\|v\|^2\}=-\nabla f$ and thus
selects the steepest-descent velocity.

\begin{purplepropositionbox}
\begin{proposition}[Riemannian steepest drift]
\label{prop:riemannian-steepest-drift}
If $\|\operatorname{grad}_{\mathsf g}\psi_{p,q}\|_{q,\mathsf g}<\infty$, then
the regularized steepest-drift problem has the unique solution
($q\text{-a.e.}$)
\begin{align}
  V_{p,q}
  &:=\underset{u:\,\|u\|_{q,\mathsf g}<\infty}{\arg\min}
  \left\{
    \int_{\mathcal M}d(\psi_{p,q})_x[u(x)]q(x)\,
    d\mathrm{vol}_{\mathsf g}(x)
    +\frac12\|u\|_{q,\mathsf g}^2
  \right\}
  =-\operatorname{grad}_{\mathsf g}\psi_{p,q}
  \label{eq:general-riemannian-velocity}
\end{align}
\end{proposition}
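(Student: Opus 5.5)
The plan is to rewrite the linear term through the Riemannian gradient and complete the square in the Hilbert space of $q$-weighted tangent fields. By the defining identity \eqref{eq:riemannian-gradient-definition}, for $q$-a.e.\ $x$ and every admissible $u$ we have $d(\psi_{p,q})_x[u(x)]=\mathsf g_x(\operatorname{grad}_{\mathsf g}\psi_{p,q}(x),u(x))$. Writing $G:=\operatorname{grad}_{\mathsf g}\psi_{p,q}$, the objective becomes
\begin{equation*}
  J(u):=\langle G,u\rangle_{q,\mathsf g}+\tfrac12\|u\|_{q,\mathsf g}^2,
\end{equation*}
with the inner product of \eqref{eq:law-tangent-norm}. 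The hypothesis $\|G\|_{q,\mathsf g}<\infty$ and Cauchy--Schwarz (pointwise in $\mathsf g_x$, then in $L^2(q)$) give $|\langle G,u\rangle_{q,\mathsf g}|\le\|G\|_{q,\mathsf g}\|u\|_{q,\mathsf g}<\infty$. Hence $J$ is finite on the whole feasible set, and $u=-G$ is itself feasible.

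Next, I will expand pointwise and integrate to obtain
\begin{equation*}
  J(u)=\tfrac12\|u+G\|_{q,\mathsf g}^2-\tfrac12\|G\|_{q,\mathsf g}^2,
\end{equation*}
using $\mathsf g_x(u+G,u+G)=\|u\|^2+2\mathsf g_x(G,u)+\|G\|^2$. Every term is integrable by the finiteness just established, so splitting the integral is legitimate. It then follows that $J(u)\ge-\tfrac12\|G\|_{q,\mathsf g}^2$, with equality if and only if $\|u+G\|_{q,\mathsf g}=0$. Since $\mathsf g_x$ is positive definite, this holds exactly when $u(x)=-G(x)$ for $q$-a.e.\ $x$. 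This gives both existence and uniqueness $q$-a.e., namely $V_{p,q}=-\operatorname{grad}_{\mathsf g}\psi_{p,q}$.

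The only delicate point is measure-theoretic bookkeeping. I will fix the feasible class as measurable tangent fields, identified modulo $q$-null sets; this is why uniqueness can only be asserted $q$-a.e. I also need $\psi_{p,q}$ differentiable $q$-a.e., so that $G$ is a measurable section, and this is implicit in the hypothesis. No further obstacle is expected, since the argument is the Hilbert-space Riesz/completing-the-square identity applied to $L^2(q;T\mathcal M)$.
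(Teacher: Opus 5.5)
Your proposal is correct and follows essentially the same route as the paper. Both rewrite the linear term through the Riemannian gradient and complete the square, $J(u)=\tfrac12\|u+G\|_{q,\mathsf g}^2-\tfrac12\|G\|_{q,\mathsf g}^2$, from which existence and $q$-a.e.\ uniqueness of $u=-G$ follow; your Cauchy--Schwarz, integrability and positive-definiteness checks only make explicit what the paper leaves implicit.
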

\end{purplepropositionbox}
The proof is given in Appendix~\ref{app:proof:steepest-velocity}.

Let $t\mapsto q_t$ be a sufficiently smooth density path transported by
$V_{p,q_t}$.  It satisfies the Riemannian continuity equation
\begin{equation}
  \partial_tq_t+\operatorname{div}_{\mathsf g}(q_tV_{p,q_t})=0.
  \label{eq:riemannian-continuity}
\end{equation}
For $\mathcal M=\mathbb R^D$ with the Euclidean metric,
$\operatorname{grad}_{\mathsf g}=\nabla$, recovering
$V_{p,q}=-\nabla\psi_{p,q}$ from Eq.~\eqref{eq:background-wgf}.

\paragraph{Result: the law-to-sample map.}
Proposition~\ref{prop:riemannian-steepest-drift} shows that the first variation
and the ground metric determine the intrinsic sample drift.
The equality holds for $q$-almost every $x$.
Changing $\mathcal F_p$ changes the scalar field $\psi_{p,q}$, changing
$\mathsf g$ changes how its differential is converted into a tangent vector.
Together with Eq.~\eqref{eq:riemannian-continuity}, this field determines the
evolution of $q_t$.  Figure~\ref{fig:objective-geometry-interface} illustrates
this passage from objective descent to intrinsic particle motion.

\subsection{Population dissipation in the general geometry}
\label{sec:method:dissipation}

Having identified the velocity, we next verify that its population flow
decreases the functional from which it was derived.

\begin{purplepropositionbox}
\begin{proposition}[General population dissipation theorem]
\label{prop:population-dissipation}
Let $(q_t)_{t\in[0,T]}$ be a sufficiently smooth path of probability densities
with finite second moments that solves Eq.~\eqref{eq:riemannian-continuity} with
$V_{p,q_t}=-\operatorname{grad}_{\mathsf g}\psi_{p,q_t}$.  Assume that the
dynamic transport chain rule
\begin{equation*}
  \frac{d}{dt}\mathcal F_p(q_t)
  =\int_{\mathcal M}d(\psi_{p,q_t})_x[V_{p,q_t}(x)]q_t(x)\,
  d\mathrm{vol}_{\mathsf g}(x)
\end{equation*}
holds for almost every $t$.  Then, for almost every $t$,
\begin{equation}
  \frac{d}{dt}\mathcal F_p(q_t)
  =-\mathcal D_p(q_t)\leq0,
  \qquad
  \mathcal D_p(q)
  :=\int_{\mathcal M}
  \|\operatorname{grad}_{\mathsf g}\psi_{p,q}(x)\|_{\mathsf g,x}^2q(x)\,
  d\mathrm{vol}_{\mathsf g}(x),
  \label{eq:population-descent}
\end{equation}
\end{proposition}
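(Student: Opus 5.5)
The argument is a direct computation from the assumed dynamic transport chain rule. Because that rule is a hypothesis here, no analysis of the continuity equation Eq.~\eqref{eq:riemannian-continuity} is needed beyond what it encodes. The plan is to rewrite the integrand $d(\psi_{p,q_t})_x[V_{p,q_t}(x)]$ using the defining property of the Riemannian gradient, Eq.~\eqref{eq:riemannian-gradient-definition}, and then substitute $V_{p,q_t}=-\operatorname{grad}_{\mathsf g}\psi_{p,q_t}$.

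\textbf{Step 1: pointwise identity.} Fix a time $t$ at which the chain rule holds. For $q_t$-almost every $x$, apply Eq.~\eqref{eq:riemannian-gradient-definition} with $\phi=\psi_{p,q_t}$ and $u=V_{p,q_t}(x)\in T_x\mathcal M$. By linearity of $\mathsf g_x$ this gives
\begin{equation*}
  d(\psi_{p,q_t})_x[V_{p,q_t}(x)]
  =\mathsf g_x\bigl(\operatorname{grad}_{\mathsf g}\psi_{p,q_t}(x),-\operatorname{grad}_{\mathsf g}\psi_{p,q_t}(x)\bigr)
  =-\|\operatorname{grad}_{\mathsf g}\psi_{p,q_t}(x)\|_{\mathsf g,x}^2 .
\end{equation*}
This step uses only that $\psi_{p,q_t}$ is differentiable, which is part of the ``sufficiently smooth'' standing assumption.

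\textbf{Step 2: integrate.} Multiply the pointwise identity by $q_t(x)$ and integrate against $d\mathrm{vol}_{\mathsf g}$. Inserting the result into the chain rule gives $\frac{d}{dt}\mathcal F_p(q_t)=-\mathcal D_p(q_t)$. The integrand of $\mathcal D_p$ is nonnegative, so $\mathcal D_p(q_t)\ge0$ and the sign conclusion follows.

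\textbf{Expected obstacle: well-definedness of the integral.} The only delicate point is that the integral must be meaningful. The integrand is $-\|\operatorname{grad}_{\mathsf g}\psi_{p,q_t}\|^2q_t\le0$, so the integral always exists in $[-\infty,0]$. The chain rule asserts that it equals a derivative, which is a real number wherever it holds. Hence $\mathcal D_p(q_t)<\infty$ for almost every $t$, and no extra integrability hypothesis is needed. Equivalently, at such $t$ we have $\|V_{p,q_t}\|_{q_t,\mathsf g}<\infty$, so $V_{p,q_t}$ is also the steepest drift of Proposition~\ref{prop:riemannian-steepest-drift}. The identity then holds for every $t$ in the full-measure set where the chain rule holds, which is exactly the ``almost every $t$'' in the statement.
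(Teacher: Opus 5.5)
Your proof is correct and follows the paper's argument: you substitute $V_{p,q_t}=-\operatorname{grad}_{\mathsf g}\psi_{p,q_t}$ into the assumed chain rule, use the Riemannian-gradient identity to rewrite the integrand as $-\|\operatorname{grad}_{\mathsf g}\psi_{p,q_t}\|_{\mathsf g,x}^2q_t$, and integrate. The one small difference is that the paper simply places $\operatorname{grad}_{\mathsf g}\psi_{p,q_t}$ in $L^2(q_t;T\mathcal M)$, whereas you deduce finiteness of $\mathcal D_p(q_t)$ from the chain rule itself; that is a valid minor refinement.
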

\end{purplepropositionbox}
The proof and technical extensions are in
Appendix~\ref{app:proof:population-dissipation}.

\paragraph{Result: population descent.}
For every sufficiently regular solution satisfying the stated chain rule,
the objective is non-increasing at rate $\mathcal D_p(q_t)$, the squared
$q_t$-weighted norm of the first-variation gradient.

\subsection{Objective-specific drifts}
\label{sec:method:drifts}

The framework supports different law-level objectives, but obtaining an
explicit drift requires choosing $\mathcal F_p$ and computing its first
variation $\psi_{p,q}$.  Since
$V_{p,q}=-\operatorname{grad}_{\mathsf g}\psi_{p,q}$, the objective determines
the descent direction, while the ground geometry turns it into an intrinsic
tangent velocity.  Both choices therefore shape the sample dynamics and can
substantially affect generative performance.  In this work we consider three functionals: Sinkhorn divergence, MMD, and KL divergence. We also consider three ground geometries: Euclidean space $\mathbb R^D$, the round
sphere $\mathbb S^{D-1}$, and the product simplex
$(\Delta_\circ^{d-1})^m$ equipped with the Fisher--Rao metric.  To express
these drifts in a common notation,
$\nabla_x$ denotes an ambient-coordinate representative of the descent
covector $-d\psi_{p,q}(x)$.  Each drift is obtained in two steps: first, we
compute this ambient derivative and second, the ground metric maps it to a
velocity in $T_x\mathcal M$. In Euclidean space, this map is $I$, so the ambient direction is unchanged.
On the sphere, it is $P_x=I-xx^\top$, which removes the radial component. On the product simplex, it is $
  G_x=\operatorname{blkdiag}_{\ell=1}^m
  \bigl(\operatorname{Diag}(x_\ell)-x_\ell x_\ell^\top\bigr)$,
which performs the Fisher--Rao conversion blockwise and produces a zero-sum
tangent vector in every simplex factor.  The construction and geometric
interpretation of these maps are given in
Appendix~\ref{app:concrete-geometries}. Table~\ref{tab:objective-drifts} gives the resulting drift for each of the
nine objective--geometry pairs.

\begin{table}[ht]
  \caption{Intrinsic drift $V_{p,q}(x)$ for each objective--geometry pair
  considered in this work.  Definitions and drift derivations for Sinkhorn,
  MMD, and KL are given in Appendices~\ref{app:proof:sinkhorn-drift},
  \ref{app:proof:mmd-drift}, and~\ref{app:proof:kl-drift}.}
  \label{tab:objective-drifts}
  \centering
  \footnotesize
  \resizebox{\linewidth}{!}{%
  \begin{tabular}{@{}lccc@{}}
    \toprule
    \textbf{Manifold $\mathcal M$}
    & $\mathcal S_\varepsilon(q,p)$
    & $\tfrac12\operatorname{MMD}_\kappa^2(q,p)$
    & $\operatorname{KL}(q\,\|\,p)$ \\
    \midrule
    \textbf{Euclidean:} $\mathbb R^D$
    & $\nabla_x\!\left(f_{q,q}^\varepsilon-f_{q,p}^\varepsilon\right)$
    & $\nabla_x\!\left(m_p^\kappa-m_q^\kappa\right)$
    & $\nabla_x\log\!\frac{p(x)}{q(x)}$ \\
    \addlinespace[4pt]
    \textbf{Round sphere:} $\mathbb S^{D-1}$
    & $P_x\nabla_x\!\left(f_{q,q}^\varepsilon-f_{q,p}^\varepsilon\right)$
    & $P_x\nabla_x\!\left(m_p^\kappa-m_q^\kappa\right)$
    & $P_x\nabla_x\log\!\frac{p(x)}{q(x)}$ \\
    \addlinespace[4pt]
    \textbf{Product simplex:} $(\Delta_\circ^{d-1})^m$
    & $G_x\nabla_x\!\left(f_{q,q}^\varepsilon-f_{q,p}^\varepsilon\right)$
    & $G_x\nabla_x\!\left(m_p^\kappa-m_q^\kappa\right)$
    & $G_x\nabla_x\log\!\frac{p(x)}{q(x)}$ \\
    \bottomrule
  \end{tabular}%
  }
\end{table}
\FloatBarrier

\subsection{Practical discretization and neural amortization}
\label{sec:method:discretization}

The framework above describes how to move samples toward a target
distribution.  Our goal, however, is to train a model that can
generate from that distribution in a single forward pass.  We make this
practical by discretizing the flow, estimating the drifting field with finite number of samples and amortizing its updates into a neural
generator.

\paragraph{From the continuous drift to a finite update.}
The field $V_{p,q}$ specifies only the instantaneous motion of samples under
the current model law $q$, whereas training requires a finite target point
that remains on $\mathcal M$.  Fix the target law $p$, the current model law
$q$, a sample $x\in\mathcal M$, and a small step size $\eta>0$.  The vector
$V_{p,q}(x)\in T_x\mathcal M$ gives the sample velocity at $x$.  If this
velocity is held fixed during one step, the intrinsic reference update is
$\operatorname{Exp}_x(\eta V_{p,q}(x))$: for sufficiently small $\eta$, it is
the point reached by following the geodesic from $x$ in the direction
$V_{p,q}(x)$.  We call an update geometry-compatible when its discrepancy
from this reference is $O(\eta^2)$.

On a general manifold, $\operatorname{Exp}_x$ may have no closed-form
expression, and evaluating it can require numerically solving the geodesic
equation.  We therefore use a $C^2$ retraction, i.e., a local map
$\mathcal R_x:T_x\mathcal M\to\mathcal M$ defined near
$0\in T_x\mathcal M$, satisfying
\[
  \mathcal R_x(0)=x,
  \qquad
  \left.\frac{d}{ds}\mathcal R_x(sv)\right|_{s=0}=v
  \quad\text{for every }v\in T_x\mathcal M.
\]
Thus the retraction starts at $x$ and preserves every infinitesimal tangent
direction.  For sufficiently small $\eta$, we take
$\mathcal R_x\bigl(\eta V_{p,q}(x)\bigr)$ as the practical update.

\begin{purplepropositionbox}
\begin{proposition}[Local consistency of the retraction step]
\label{prop:retraction-consistency}
Let $\mathcal R$ be a $C^2$ retraction.  For fixed $p$, $q$, and $x$, as
$\eta\downarrow0$,
\begin{equation}
  d_{\mathcal M}\!\left(
  \mathcal R_x\!\bigl(\eta V_{p,q}(x)\bigr),
  \operatorname{Exp}_x\!\bigl(\eta V_{p,q}(x)\bigr)
  \right)=O(\eta^2).
  \label{eq:general-retraction-consistency}
\end{equation}
\end{proposition}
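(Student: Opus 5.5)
Set $v:=V_{p,q}(x)\in T_x\mathcal M$, which is fixed because $p$, $q$ and $x$ are fixed. If $v=0$, both points equal $x$ and there is nothing to prove, so assume $v\neq0$. The plan is to compare the two curves $\alpha(s):=\mathcal R_x(sv)$ and $\beta(s):=\operatorname{Exp}_x(sv)$ for small $s\ge0$. Both are $C^2$ curves with $\alpha(0)=\beta(0)=x$ and $\dot\alpha(0)=\dot\beta(0)=v$. This follows from the retraction axioms for $\alpha$, and from $\operatorname{Exp}_x(sv)=\gamma_{x,v}(s)$ for $\beta$. The claim is then a statement that two curves agreeing to first order at $s=0$ have endpoints at distance $O(s^2)$, evaluated at $s=\eta$.

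The steps, in order, are as follows. First, I would pick a normal coordinate chart $\varphi=\operatorname{Exp}_x^{-1}$ on a geodesically convex neighbourhood $U$ of $x$ with compact closure. Both curves stay in $U$ for $s\in[0,s_0]$ by continuity, since $\mathcal R_x$ is defined near $0$ and $\operatorname{Exp}_x$ is defined for small $sv$.

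Second, in these coordinates $\tilde\alpha=\varphi\circ\alpha$ and $\tilde\beta=\varphi\circ\beta$ are $C^2$ maps into $\mathbb R^D$. In normal coordinates $\tilde\beta(s)=sv$ exactly. Taylor's theorem with remainder, using that $\tilde\alpha$ is $C^2$ on the compact interval $[0,s_0]$, gives
\[
\|\tilde\alpha(s)-\tilde\beta(s)\|\le \tfrac12 s^2\sup_{[0,s_0]}\|\ddot{\tilde\alpha}\|=:Cs^2 ,
\]
because the zeroth- and first-order terms coincide.

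Third, I convert this Euclidean coordinate bound into a bound on $d_{\mathcal M}$. On the compact set $\overline U$, the metric coefficients $\mathsf g_{ij}$ are bounded above by $\Lambda I$. The straight coordinate segment between $\tilde\alpha(s)$ and $\tilde\beta(s)$ lies in $\varphi(U)$ for small $s$, taking $\varphi(U)$ to be a coordinate ball, which is convex in coordinates. Its Riemannian length is therefore at most $\sqrt\Lambda\,\|\tilde\alpha(s)-\tilde\beta(s)\|$. Since $d_{\mathcal M}$ is an infimum of lengths, we get
\[
d_{\mathcal M}(\alpha(s),\beta(s))\le\sqrt\Lambda\, C s^2 .
\]
Setting $s=\eta$ gives Eq.~\eqref{eq:general-retraction-consistency}.

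The main obstacle is bookkeeping rather than depth. One must ensure both curves remain in a single chart where the metric comparison constant is uniform, and that the coordinate expression of the $C^2$ retraction has a bounded second derivative there. I would handle this by shrinking $s_0$ and using compactness of $\overline U$. The choice of normal coordinates is not essential; any chart works, with $\tilde\beta$ then also expanded to second order. I would mention that alternative in case normal coordinates seem to need extra regularity.
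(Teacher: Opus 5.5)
Your proof is correct and follows essentially the same route as the paper's: work in a local chart, use that $s\mapsto\mathcal R_x(sv)$ and $s\mapsto\operatorname{Exp}_x(sv)$ agree to first order at $s=0$ so that Taylor's theorem gives an $O(\eta^2)$ coordinate discrepancy, then pass to $d_{\mathcal M}$ by local equivalence of coordinate and Riemannian distances. Your version only makes explicit the bookkeeping the paper leaves implicit, namely normal coordinates (so the exponential curve is linear) and the $\sqrt\Lambda$ length bound on the straight coordinate segment inside a coordinate ball.
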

\end{purplepropositionbox}
Thus the retraction step stays on $\mathcal M$ and reproduces the intrinsic
motion to first order.  Concrete choices of $\mathcal R_x$ for each ground
geometry considered in this work are given in
Appendix~\ref{app:concrete-geometries}.  The proof and additional details are
given in Appendix~\ref{app:proof:finite-update}.

\paragraph{Finite-sample fields.}
In practice, we do not have analytic access to $p$ and $q$ or to the
population expectations defining $V_{p,q}$, we only have target samples and
samples from the current generator.  Following prior sample-based WGF and
drifting methods~\citep{arbel2019mmd,he2026sinkhorndrifting,han2026one}, we
draw independent target and model banks and form the empirical distributions
\begin{equation}
  \widehat p:=\frac1{N_p}\sum_{j=1}^{N_p}\delta_{y_j^+},
  \qquad
  \widehat q:=\frac1{N_q}\sum_{j=1}^{N_q}\delta_{y_j^-},
  \qquad
  y_j^+\overset{\mathrm{i.i.d.}}{\sim}p,\quad
  y_j^-\overset{\mathrm{i.i.d.}}{\sim}q,
  \label{eq:empirical-banks}
\end{equation}
where $\delta_y$ is the Dirac mass at $y$.  For MMD and Sinkhorn, whose
objectives accept atomic inputs, these empirical laws induce a plug-in field,
which we denote by $\widehat V_{p,q} := V_{\widehat p,\widehat q}$. However, a direct plug-in is not available for KL.  The atomic measures $\widehat p$
and $\widehat q$ are singular with respect to Riemannian volume, so they have
neither positive densities nor ordinary scores:
$\operatorname{grad}_{\mathsf g}\log\widehat p$ and
$\operatorname{grad}_{\mathsf g}\log\widehat q$ are
undefined, and disjoint empirical supports typically give
$\operatorname{KL}(\widehat q\,\|\,\widehat p)=+\infty$.  Following prior
Parzen-smoothed score estimation of drifting fields
~\citep{gretton2026interpretation,han2026one},
we instead choose a smoothing kernel $\kappa_\rho$ and define
\begin{equation}
  \begin{aligned}
    \widehat p_\rho(x)&:=\frac1{N_p}\sum_{j=1}^{N_p}\kappa_\rho(x,y_j^+),
    &\widehat q_\rho(x)&:=\frac1{N_q}\sum_{j=1}^{N_q}\kappa_\rho(x,y_j^-).
  \end{aligned}
  \label{eq:parzen-kl-field}
\end{equation}
This gives a smoothed-score approximation to the population KL field, not the
exact KL drift of the atomic empirical laws.  Geometry-specific choices of
kernels and costs are described in
Appendix~\ref{app:concrete-geometries}.

\paragraph{Neural amortization.}
We now use the proposed framework to train a generative model.  During
training, we discretize the Riemannian WGF with the retraction step and replace the population
drift $V_{p,q}$ by the sample-based field $\widehat V_{p,q}$.  These steps provide stopped training
targets, allowing the generator to absorb the iterative refinement while
retaining one-pass inference.

More concretly, let $p_0$ be a latent distribution on $\mathcal Z$ and
$f_\theta:\mathcal Z\to\mathcal M$ the generator, inducing
$q_\theta:=(f_\theta)_\#p_0$.  For $z\sim p_0$, define
\begin{equation}
  x=f_\theta(z),
  \qquad
  \widetilde x
  =\operatorname{sg}\!\left[
    \mathcal R_x\!\left(\eta\widehat V_{p,q_\theta}(x)\right)
  \right].
  \label{eq:teacher-target}
\end{equation}
The target $\widetilde x$ is held fixed by stop-gradient. Then we update our generator with
\begin{equation}
  \mathcal L(\theta)
  :=\mathbb E_{z\sim p_0}\!\left[
    d_{\mathcal M}^2\!\left(
      x,\widetilde x
    \right)
  \right].
  \label{eq:tracking-objective}
\end{equation}
The pseudo-code of training
procedure is presented in Algorithm~\ref{alg:geometric-drifting}.

\begin{figure}[t]
  \noindent
  \begin{minipage}[t]{0.44\linewidth}
    \vspace{0pt}
    \hrule
    \vspace{2pt}
    \refstepcounter{algorithm}
    \label{alg:geometric-drifting}
    \addcontentsline{loa}{algorithm}{\protect\numberline{\thealgorithm}R-WGF training.}
    \noindent\textbf{Algorithm \thealgorithm\ R-WGF training.}\par
    \vspace{2pt}
    \hrule
    \vspace{3pt}
    \small\ttfamily
    \setlength{\parindent}{0pt}
    \textcolor{algorithmcomment}{\# $f_\theta$: trainable model}\\
    \textcolor{algorithmcomment}{\# functional: $\mathcal F_p$}\\
    \textcolor{algorithmcomment}{\# target distribution: $p$}\\[3pt]
    $z$ = \textcolor{algorithmfunction}{sample}($p_0$); $z'$ = \textcolor{algorithmfunction}{sample}($p_0$)\\
    $x = f_\theta(z)$\\
    $y^-$ = \textcolor{algorithmfunction}{stopgrad}($f_\theta(z')$)\\
    $y^+$ = \textcolor{algorithmfunction}{sample}($p$)\\
    V = \textcolor{algorithmfunction}{compute\_V}($\mathcal F_p$, $x$, $y^+$, $y^-$)\\
    $x_{\mathrm{target}}$ = \textcolor{algorithmfunction}{stopgrad}(%
      \textcolor{algorithmfunction}{step}($x$, $\eta$, V))\\[3pt]
    $\mathcal L(\theta)$ = \textcolor{algorithmfunction}{d\_loss}(%
      $x$, $x_{\mathrm{target}}$)\par
    \vspace{3pt}
    \hrule
  \end{minipage}\hfill
  \begin{minipage}[t]{0.55\linewidth}
    \vspace{0pt}
    \centering
    \includegraphics[width=0.86\linewidth]{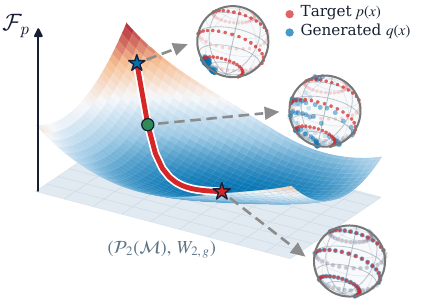}
  \end{minipage}
  \caption{\textbf{Left:} R-WGF training algorithm. \textbf{Right:} Visualization of training dynamics of
  $q$ on $\mathcal M=\mathbb S^2$ toward a spherical target $p$.}
  \label{fig:objective-geometry-interface}
\end{figure}

\subsection{Training in Feature Space}
\label{sec:method:practical}

Thus far, the costs and kernels in $\mathcal F_p$ act directly on $\mathcal M$.
As in Drifting Models~\citep{deng2026drifting}, the formulation extends to any
feature space.  Let $\phi:\mathcal M\to\mathbb R^k$ be a frozen feature
extractor applied to real and generated samples.  We compute the objective
between $\phi_\#q$ and $\phi_\#p$, so its first variation has the form
$\psi_{p,q}=\widetilde\psi_{p,q}\circ\phi$, where $\widetilde\psi_{p,q}$ is
estimated from $\phi(y^+)$ and $\phi(y^-)$.  The drift is still a tangent
vector on $\mathcal M$:
\begin{equation}
  V^{\phi}_{p,q}(x)
  =-\operatorname{grad}_{\mathsf g}\bigl(\widetilde\psi_{p,q}\circ\phi\bigr)(x).
  \label{eq:feature-drift}
\end{equation}
It is computed with one vector--Jacobian product through $\phi$, followed by
the metric conversion of Section~\ref{sec:method:drifts}.  The feature map only
changes how samples are compared.  Samples still move on $\mathcal M$, and the
loss in Eq.~\eqref{eq:tracking-objective} is unchanged.  Feature extraction is
used only at training time.  The concrete Sinkhorn, MMD, and KL drifts are
derived in Appendix~\ref{app:proofs}, and the concrete costs and kernels are
given in Appendix~\ref{app:experimental-details}.

This extends to several features $\phi_r$, $r=1,\dots,R$, for example several
layers of one encoder.  Each feature gives its own drift $V_r$.  Features have
different scales, so we normalize each drift by its norm from Eq.~\eqref{eq:law-tangent-norm}, similar
to the drift normalization of Drifting Models.  The overall loss is
\begin{equation}
  \mathcal L(\theta)
  =\sum_{r=1}^{R}w_r\,\mathbb E_{z\sim p_0}\!\left[
    d_{\mathcal M}^2\!\left(x,\,
      \operatorname{sg}\!\left[\mathcal R_x\!\left(\eta V_r(x)/\|V_r\|_{q,\mathsf g}\right)\right]
    \right)\right],
  \qquad x=f_\theta(z),\quad \sum_{r=1}^{R}w_r=1.
  \label{eq:multi-feature-loss}
\end{equation}
To first order in $\eta$, this is one step along the weighted sum
$\sum_rw_rV_r/\|V_r\|_{q,\mathsf g}$.

\paragraph{Relation to Drifting Models.}
Drifting Models regress $\phi(x)$ onto $\phi(x)+V(\phi(x))$, so both the drift
and the target live in feature space.  In R-WGF, the feature map only defines
which samples are close, while the metric $\mathsf g$ defines how $x$ moves.
The drift is pulled back to $\mathcal M$, and the target stays on
$\mathcal M$.

\section{Related Work}
\label{sec:related-work}

\paragraph{Wasserstein gradient flows and drifting models.}
Drifting Models~\citep{deng2026drifting} regress a one-step generator onto
samples displaced by a heuristic attraction--repulsion field.  Later analyses
showed that this field is a difference of smoothed scores only for Gaussian
kernels~\citep{turan2026generative,lai2026unified,franz2026drifting}, and that
the practical algorithm is not the gradient flow of any
functional~\citep{gretton2026interpretation}.  \citet{gretton2026interpretation}
make the objective-first alternative explicit: the velocity is the negative
gradient of the first variation of a chosen functional.  This route has been
followed with KDE-approximated divergences~\citep{cao2026gradient} and with the
Sinkhorn divergence~\citep{feydy2019sinkhorn} in Sinkhorn-Drifting
\citep{he2026sinkhorndrifting} and W-Flow~\citep{han2026one}. W-Flow scales it with a two-batch estimate of the self-term, which we also adopt.
Drifting has further been used to refine discrete diffusion language
models~\citep{oba2026drifting}.  All these methods are Euclidean: velocities
live in an ambient or feature space and are applied additively.  R-WGF keeps
the objective-first recipe but makes the velocity, the update step, and the
loss intrinsic to the manifold, and it recovers the fields above in the
Euclidean case (Table~\ref{tab:objective-drifts}).

\paragraph{Generation on Riemannian and categorical spaces.}
Generative models on manifolds regress intrinsic tangent fields along a
prescribed probability path~\citep{chen2024riemannian}.  For categorical data,
the Fisher--Rao geometry of the simplex underlies Fisher and statistical flow
matching~\citep{davis2024fisher,cheng2024categorical}, alongside Dirichlet and
discrete flow matching~\citep{stark2024dirichlet,gat2024discrete}.  One- and
few-step variants learn flow maps of such paths, both on
manifolds~\citep{cheng2025riemannian,davis2025generalised,woo2026riemannian,
zhong2026riemannian} and for categorical
data~\citep{roos2026categorical,potaptchik2026discrete,lee2026flow}, or build on
learned or closed-form data--noise couplings~\citep{peng2026coupling,
park2025pairflow}.  Unlike these methods, R-WGF uses the geometry as the ground
metric of a WGF~\citep{ambrosio2008gradient,erbar2010heat}, and its velocity
comes from a distributional objective without a path or a time variable.

\paragraph{Geometry-aware drifting and the position of R-WGF.}
Three works are closest to ours.  KGDM~\citep{esteban2026kernel} replaces the
Euclidean displacement in the drifting field by the kernel gradient
$\operatorname{grad}_{\mathsf g,x}\kappa_\rho(x,y)$.  With the Parzen estimates
$\widehat p_\rho$ and $\widehat q_\rho$ of Eq.~\eqref{eq:parzen-kl-field}, its
drift is $\operatorname{grad}_{\mathsf g}\log\widehat p_\rho
-\operatorname{grad}_{\mathsf g}\log\widehat q_\rho$, which is exactly the KL
branch of R-WGF.
KGDM proves that this
drift decreases a smoothed KL only in Euclidean space.
Proposition~\ref{prop:population-dissipation} shows that the exact population
flow dissipates on $\mathcal M$.
\citet{cao2026gradient}
outline a spherical variant with a von Mises--Fisher kernel.  Since
$\|x-y\|_2^2=2-2x^\top y$ on $\mathbb S^{D-1}$, it coincides with our
chordal-Gaussian KL field, but they do not specify how outputs stay on the
manifold and show only a two-dimensional example.
CatDrift~\citep{eijkelboom2026categorical} uses a negative-entropy mirror update
on the simplex, which has the same softmax form as our exponentiated retraction.  By
Propositions~\ref{prop:simplex-retraction} and~\ref{prop:retraction-consistency},
this mirror step is a valid retraction and matches the exponential-map step up
to $O(\eta^2)$.

R-WGF differs from these works in three ways.  First, it starts from the
objective rather than the kernel.  Any functional with a regular first
variation and any ground metric give the drift
$V_{p,q}=-\operatorname{grad}_{\mathsf g}\psi_{p,q}$.  Second, it yields Sinkhorn and MMD drifts on the sphere and on
Fisher--Rao simplices, which, to our knowledge, have not been derived or
evaluated before.  The Riemannian Sinkhorn drift generalizes the Euclidean
target-minus-self field~\citep{he2026sinkhorndrifting,han2026one}.  Third, it
compares MMD, Sinkhorn, and KL under one protocol on spherical, image, and
language data. Further related work is discussed in
Appendix~\ref{app:related-work}.

\section{Experiments}
\label{sec:experiments}

We evaluate our method in three settings.  First, following KGDM~\citep{esteban2026kernel}, we model geospatial data intrinsically on
$\mathbb S^2$.
Second, following PairFlow~\citep{park2025pairflow}, we model binarized MNIST and discretized CIFAR-10 on a product simplices.  Third, we
extend the same product-simplex formulation to LM1B language modeling,
building on FMLM
~\citep{lee2026flow}.  Full protocols
and extended results are in Appendix~\ref{app:experimental-details}. The KL, MMD, and Sinkhorn
variants of R-WGF are denoted R-WGF-KL, R-WGF-MMD, and R-WGF-Sinkhorn, respectively.

\subsection{Geospatial data on the sphere}
\label{sec:experiments:sphere}

We use the Earth benchmark on $\mathbb S^2$~\citep{mathieu2020riemannian} and
compare \mbox{R-WGF-MMD}, \mbox{R-WGF-KL}, and \mbox{R-WGF-Sinkhorn} with
KGDM~\citep{esteban2026kernel}. For a fair comparison, we reproduce KGDM in our training pipeline.
Theoretically, KGDM and \mbox{R-WGF-KL} are equivalent.  In practice, we implemented one variant of KGDM model
with a Laplace kernel, whereas our \mbox{R-WGF-KL}
uses a Gaussian kernel.  Section~\ref{sec:related-work}
discusses this relation.

\begin{table}[H]
  \setlength{\abovecaptionskip}{0pt}
  \setlength{\belowcaptionskip}{1pt}
  \caption{Held-out MMD on the Earth benchmark ($\downarrow$), with the three best results highlighted}
  \label{tab:earth-mmd-wgf-seed0}
  \centering
  \small
  \setlength{\tabcolsep}{2pt}
  \renewcommand{\arraystretch}{0.80}
  \setlength{\aboverulesep}{0.25ex}
  \setlength{\belowrulesep}{0.35ex}
  \resizebox{0.84\linewidth}{!}{%
  \begin{tabular}{@{}lcccc@{}}
    \toprule
    Dataset
      & \mbox{R-WGF-MMD (\textbf{Ours})}
      & \mbox{R-WGF-KL (\textbf{Ours})}
      & \mbox{R-WGF-Sinkhorn (\textbf{Ours})}
      & KGDM \\
    \midrule
    Earthquake & \cellcolor{rankblue!32}$\mathbf{0.032\pm0.0068}$
      & $0.050\pm0.0015$ & \cellcolor{rankblue!18}$0.041\pm0.0048$
      & \cellcolor{rankblue!8}$0.044\pm0.0033$ \\
    Volcano & \cellcolor{rankblue!8}$0.13\pm0.011$
      & \cellcolor{rankblue!32}$\mathbf{0.099\pm0.024}$
      & \cellcolor{rankblue!18}$0.12\pm0.047$
      & $0.14\pm0.035$ \\
    Fire & $0.036\pm0.012$
      & \cellcolor{rankblue!32}$\mathbf{0.022\pm0.0067}$
      & \cellcolor{rankblue!18}$0.031\pm0.0034$
      & \cellcolor{rankblue!8}$0.034\pm0.014$ \\
    Flood & \cellcolor{rankblue!8}$0.060\pm0.014$
      & $0.074\pm0.013$
      & \cellcolor{rankblue!32}$\mathbf{0.046\pm0.011}$
      & \cellcolor{rankblue!18}$0.054\pm0.0085$ \\
    \bottomrule
  \end{tabular}
  }
\end{table}

\begin{samepage}

Table~\ref{tab:earth-mmd-wgf-seed0} shows that no single functional dominates:
\mbox{R-WGF-MMD} is best on Earthquake, \mbox{R-WGF-KL} on Volcano and Fire,
and \mbox{R-WGF-Sinkhorn} on Flood.
\end{samepage}

\subsection{Image generation}
\label{sec:experiments:images}

Following PairFlow~\citep{park2025pairflow}, we evaluate one-step generation
on Binary-MNIST~\citep{deng2012mnist} and discretized
CIFAR-10~\citep{krizhevsky2009learning}.  Each Binary-MNIST pixel is a binary
variable, and each CIFAR-10 subpixel is a 256-way categorical variable. We follow
the FID protocol of PairFlow with argmax decoding, and compare with PairFlow,
Categorical Flow Maps (CFM)~\citep{roos2026categorical}, and
CatDrift~\citep{eijkelboom2026categorical} in
Table~\ref{tab:mnist-one-step-main}. On CIFAR-10, short pilot runs with the same recipe gave R-WGF-Sinkhorn the lowest FID,
while R-WGF-KL and R-WGF-MMD became unstable
(Appendix Fig.~\ref{fig:cifar-pilot-runs}). Because of the training cost at this scale, we therefore train only R-WGF-Sinkhorn fully.

\begin{table}[!h]
  \footnotesize
  \setlength{\belowcaptionskip}{3pt}
  \caption{Discretized image generation FID ($\downarrow$) and IS
  ($\uparrow$), with the three best results per column highlighted. Full sample grids are in
  Figs.~\ref{fig:mnist-full-sample-grid} and~\ref{fig:cifar-full-sample-grid}.}
  \label{tab:mnist-one-step-main}
  \centering
  \scriptsize
  \setlength{\tabcolsep}{1.5pt}
  \renewcommand{\arraystretch}{0.86}
  \resizebox{\linewidth}{!}{%
  \begin{tabular}[t]{@{}c@{}}
    {\bfseries Binary-MNIST}\\[2pt]
    \begin{tabular}{@{}lcr@{}}
      \toprule
      Method & NFE & FID $\downarrow$ \\
      \midrule
      PairFlow~\citep{park2025pairflow} & 4 & 8.50 \\
      PairFlow + ReDi~\citep{park2025pairflow} & 4 & 8.54 \\
      CFM~\citep{roos2026categorical} & 4
        & \cellcolor{rankblue!8}7.8 \\
      \midrule
      Euclidean Drifting~\citep{eijkelboom2026categorical}
        & 1 & $29.0\pm1.2$ \\
      CatDrift~\citep{eijkelboom2026categorical} & 1 & $8.0\pm0.1$ \\
      R-WGF-MMD (\textbf{Ours})         & 1 & $11.2\pm0.4$ \\
      R-WGF-KL (\textbf{Ours})          & 1
        & \cellcolor{rankblue!32}$\mathbf{6.5\pm0.3}$ \\
      R-WGF-Sinkhorn (\textbf{Ours})    & 1
        & \cellcolor{rankblue!18}$7.0\pm0.2$ \\
      \bottomrule
    \end{tabular}
  \end{tabular}%
  \hspace{2em}%
  \begin{tabular}[t]{@{}c@{}}
    {\bfseries Discretized CIFAR-10}\\[2pt]
    \begin{tabular}{@{}lcrr@{}}
      \toprule
      Method & NFE & FID $\downarrow$ & IS $\uparrow$ \\
      \midrule
      PairFlow~\citep{park2025pairflow} & 4 & 192.62 & 2.70 \\
      PairFlow + ReDi~\citep{park2025pairflow} & 4 & 184.45
        & \cellcolor{rankblue!8}2.91 \\
      PairFlow + DCD~\citep{park2025pairflow} & 4
        & \cellcolor{rankblue!8}139.46 & \cellcolor{rankblue!18}3.23 \\
      \midrule
      Euclidean Drifting~\citep{eijkelboom2026categorical}
        & 1 & $210\pm10.4$ & -- \\
      CatDrift~\citep{eijkelboom2026categorical} & 1
        & \cellcolor{rankblue!18}$58\pm2.9$ & -- \\
      PairFlow + ReDi~\citep{park2025pairflow} & 1 & 275.09 & 1.80 \\
      PairFlow + DCD~\citep{park2025pairflow} & 1 & 223.86 & 2.21 \\
      R-WGF-Sinkhorn (\textbf{Ours})    & 1
        & \cellcolor{rankblue!32}$\mathbf{46.9\pm0.5}$
        & \cellcolor{rankblue!32}$\mathbf{6.7\pm0.1}$ \\
      \bottomrule
    \end{tabular}
  \end{tabular}}
\end{table}

On Binary-MNIST, R-WGF-KL and R-WGF-Sinkhorn achieve the best FID with one
NFE and also outperform the four-step baselines.  R-WGF-MMD performs
comparably to the baselines.  On CIFAR-10, R-WGF-Sinkhorn reaches an FID of
46.9, below CatDrift (58) and far below all PairFlow variants, including their
four-step versions.  It also attains the highest IS.  On both datasets, the geometry-aware one-step
generators clearly outperform Euclidean Drifting, which supports respecting
categorical geometry.  Details and uncurated samples are in
Appendices~\ref{app:experimental-details:mnist}
and~\ref{app:experimental-details:cifar}.

\subsection{Language modeling}
\label{sec:experiments:language}
Following the FMLM setup~\citep{lee2026flow}, we evaluate our method on
LM1B~\citep{chelba2013one}.  Due to computational resource constraints, we initialize from the
released pretrained FLM checkpoint and fine-tune it with the R-WGF objective
rather than training from scratch. We split model development into two stages.
Stage~1 uses a limited number of training iterations to compare tuned
R-WGF-Sinkhorn and R-WGF-KL schedules. Sinkhorn achieves lower GenPPL
(Appendix Fig.~\ref{fig:lm1b-kl-sinkhorn-trajectories}), so Stage~2 continues
only R-WGF-Sinkhorn. We also tested R-WGF-MMD on LM1B in preliminary runs, but
it was not competitive, so we omit it from this screening. Table~\ref{tab:language-published-full} reports the final one-step results.
R-WGF-Sinkhorn reaches 122.54 GenPPL with 4.16 entropy.  It is thus
competitive with FMLM (119.34 and 4.16) and substantially better than CFM.
DFM (PSD) attains a lower GenPPL of 94.08, but its lower entropy of 4.06
indicates a partial collapse to less diverse text, which also lowers GenPPL.
Full details are in
Appendix~\ref{app:experimental-details:language}.

\begin{table}[!ht]
  \setlength{\belowcaptionskip}{3pt}
  \caption{One-step LM1B results, with the three best results highlighted. Full generation examples are in Appendix~\ref{app:language-qualitative}.}
  \label{tab:language-published-full}
  \centering
  \begin{minipage}[t]{0.56\linewidth}
    \vspace{0pt}
    \centering
    \footnotesize
    \setlength{\tabcolsep}{2pt}
    \renewcommand{\arraystretch}{0.86}
    \resizebox{\linewidth}{!}{%
    \begin{tabular}{@{}lrr@{}}
      \toprule
      Method & GenPPL $\downarrow$ & Entropy  $\uparrow$ \\
      \midrule
      CFM~\citep{roos2026categorical}
        & 274.87 & 3.10 \\
      DFM (PSD)~\citep{potaptchik2026discrete}
        & \cellcolor{rankblue!32}\textbf{94.08}
        & \cellcolor{rankblue!8}4.06 \\
      FMLM~\citep{lee2026flow}
        & \cellcolor{rankblue!18}119.34
        & \cellcolor{rankblue!32}\textbf{4.16} \\
      R-WGF-Sinkhorn (\textbf{Ours})
        & \cellcolor{rankblue!8}122.54	
        & \cellcolor{rankblue!32}\textbf{4.16} \\
      \bottomrule
    \end{tabular}%
    }
  \end{minipage}\hfill%
  \begin{minipage}[t]{0.41\linewidth}
    \vspace{0pt}
    \begin{tcolorbox}[
      colframe=samplegreen,
      colback=samplegreen!3!white,
      boxrule=0.55pt,
      arc=3pt,
      boxsep=0pt,
      left=3pt,
      right=3pt,
      top=2pt,
      bottom=2pt,
      middle=1pt,
      width=\linewidth,
      before skip=0pt,
      after skip=0pt,
      before upper={\raggedright\setlength{\parindent}{0pt}}
    ]
      {\scriptsize\bfseries\color{samplegreen}
      LM1B 1 step generation }
      \par\vspace{0.5pt}
      {\color{samplegreen}\hrule height 0.35pt}
      \vspace{1pt}
      {\scriptsize\itshape
      ``\ldots{} the 40 - year old man is expected to retain his parents
      and try to promote the health of them \ldots{}''
      \par\vspace{1pt}
      ``\ldots{} only both india and china are now preparing to incorporate
      workers or workers, who invariably depend on adherence to the federal
      economy \ldots{}''}
    \end{tcolorbox}
  \end{minipage}
\end{table}
\FloatBarrier

\section{Conclusion}
We introduced R-WGF, a framework that extends objective-driven Wasserstein
gradient-flow training to Riemannian state spaces.  Under the stated
assumptions, it yields intrinsic particle updates with objective dissipation
and locally consistent finite steps, then amortizes the flow into a one-step
generator.  Experiments on $\mathbb S^2$ and products of simplices show
competitive results on geospatial data, Binary-MNIST, discretized CIFAR-10 images, and LM1B.  Overall, respecting data geometry
provides a practical route to fast generation beyond Euclidean spaces.
Limitations are discussed in Appendix~\ref{app:limitations}.


\bibliography{references}
\bibliographystyle{iclr2027_conference}

\clearpage
\appendix

\section{Concrete Ground Geometries}
\label{app:concrete-geometries}

This appendix gives the geometry-specific ingredients used by the general
construction in Section~\ref{sec:method}.

\subsection{Euclidean geometry}
For $\mathcal M=\mathbb R^D$ with the standard metric,
\[
  T_x\mathcal M=\mathbb R^D,
  \qquad \mathsf g_x(u,w)=u^\top w,
  \qquad
  \operatorname{grad}_{\mathsf g}\psi=\nabla\psi,
  \qquad V_{p,q}=-\nabla\psi_{p,q}.
\]
No tangent projection is needed.  Geodesics are straight lines, hence
\[
  d_{\mathcal M}(x,y)=\|x-y\|_2,
  \qquad \operatorname{Exp}_x(v)=\mathcal R_x(v)=x+v.
\]
Thus Eq.~\eqref{eq:teacher-target} reduces to the usual additive stopped
teacher $\widetilde x=\operatorname{sg}[x+
\eta\widehat V_{p,q_\theta}(x)]$.

\subsection{Round-sphere geometry}
\label{app:geometry:sphere}
Let $D\geq2$ and
$\mathcal M=\mathbb S^{D-1}=\{x\in\mathbb R^D:\|x\|_2=1\}$ with the round
metric.  Feasible velocities must be orthogonal to the radius, so
\[
  T_x\mathbb S^{D-1}=\{v:x^\top v=0\},
  \qquad P_x=I-xx^\top.
\]
The projector $P_x$ removes the radial component.  For any smooth ambient
extension $\widetilde\psi_{p,q}$ of $\psi_{p,q}$,
\[
  \operatorname{grad}_{\mathsf g}\psi_{p,q}(x)
  =P_x\nabla\widetilde\psi_{p,q}(x),
  \qquad
  V_{p,q}(x)=-P_x\nabla\widetilde\psi_{p,q}(x).
\]
The projected vector is independent of the chosen extension.

For $x,y\in\mathbb S^{D-1}$ and $v\in T_x\mathbb S^{D-1}$, with
$r=\|v\|_2$,
\begin{equation}
  d_{\mathbb S}(x,y)=\arccos(x^\top y),
  \qquad
  \mathcal R_x(v):=\operatorname{Exp}_x(v)
  =\cos(r)x+\frac{\sin(r)}{r}v,
  \qquad
  \frac{\sin(r)}{r}\bigg|_{r=0}:=1.
  \label{eq:sphere-distance-exp}
\end{equation}
The distance is the angle between $x$ and $y$.  The exponential map rotates
$x$ through angle $r$ in the plane spanned by $x$ and $v$, thereby preserving
unit norm.  It satisfies $\mathcal R_x(0)=x$ and
$\left.\frac{d}{dt}\mathcal R_x(tv)\right|_{t=0}=v$, so it is a valid
retraction.  Substituting $v=\eta\widehat V_{p,q_\theta}(x)$ in the displayed
formula gives the teacher in Eq.~\eqref{eq:teacher-target}, which therefore
stays exactly on the sphere.

\subsection{Product-simplex Fisher--Rao geometry}
\label{app:geometry:categorical}

For $d$ categories and $m$ coordinates, define
\[
  \begin{gathered}
    \Delta^{d-1}=\{a\in\mathbb R_{\geq0}^d:\mathbf1^\top a=1\},
    \qquad
    \Delta_\circ^{d-1}=\{a\in\mathbb R_{>0}^d:\mathbf1^\top a=1\},\\
    \mathcal X=(\Delta^{d-1})^m,
    \qquad
    \mathcal M=\mathcal X_\circ=(\Delta_\circ^{d-1})^m.
  \end{gathered}
\]
A law on $\mathcal X_\circ$ is a joint law over the complete relaxed object;
the product ground metric does not assume independence across coordinates.

For a categorical object $y=(y_1,\ldots,y_m)$, let
$e_y=(e_{y_1},\ldots,e_{y_m})$ be its one-hot embedding.  We move data into
the smooth interior using
\[
  e_\tau(e_j)=(1-\tau)e_j+\frac{\tau}{d}\mathbf1,
  \qquad
  E_\tau(e_y)=\bigl(e_\tau(e_{y_1}),\ldots,e_\tau(e_{y_m})\bigr),
  \qquad
  p_\tau=(E_\tau\circ e)_\#p_{\mathrm{data}}.
\]
For the interior theory, $p=p_\tau$ with $0<\tau<1$. Experimentally,
$\epsilon_{\mathrm{data}}=\tau$, while $\tau=0$ is a boundary extension.

At $a\in\Delta_\circ^{d-1}$, the Fisher--Rao metric is
\[
  T_a\Delta_\circ^{d-1}=\{u:\mathbf1^\top u=0\},
  \qquad
  \langle u,w\rangle_{\mathrm{FR},a}
  =\sum_{i=1}^d\frac{u_iw_i}{a_i}.
\]
Its ambient cometric representation, extended blockwise to the product, is
\begin{equation}
  G_{\mathrm{FR}}(a)=\operatorname{Diag}(a)-aa^\top,
  \qquad
  G_x=\operatorname{blkdiag}_{\ell=1}^mG_{\mathrm{FR}}(x_\ell).
  \label{eq:fr-cometric}
\end{equation}
For any ambient covector representative $h$,
\begin{equation}
  [G_xh]_{\ell i}
  =x_{\ell i}\left(h_{\ell i}-\sum_jx_{\ell j}h_{\ell j}\right).
  \label{eq:fr-cometric-components}
\end{equation}
This vector is tangent blockwise and is unchanged if a constant is added to
all components of one block of $h$.  The derivation is in
Appendix~\ref{app:proof:fr-cometric}.

The block distance and product distance are
\begin{equation}
  d_{\mathrm{FR}}(a,b)
  =2\arccos\!\left(\sum_i\sqrt{a_ib_i}\right),
  \qquad
  d_{\mathcal X}^2(x,x')
  =\sum_{\ell=1}^m d_{\mathrm{FR}}^2(x_\ell,x'_\ell).
  \label{eq:fr-distance}
\end{equation}
Because the open simplex is not geodesically complete, the smooth results are
local to compact interiors
\[
  \mathcal X_\delta
  =\{x\in\mathcal X_\circ:x_{\ell i}\geq\delta\ \forall\ell,i\},
  \qquad 0<\delta<\tau/d.
\]

Let $b_{p,q}(x)$ be an ambient representative of the descent covector
$-d\psi_{p,q}(x)$.  The categorical intrinsic velocity is simply
\[
  V_{p,q}^{\mathrm{cat}}(x)=G_xb_{p,q}(x),
  \qquad
  [V_{p,q}^{\mathrm{cat}}(x)]_{\ell i}
  =x_{\ell i}\left(b_{\ell i}-\sum_jx_{\ell j}b_{\ell j}\right).
\]
The objective-specific covectors are listed in
Table~\ref{tab:objective-drifts}.  MMD and Sinkhorn accept atomic target and
model laws, and exact KL requires positive densities, so the empirical KL
experiments use the Parzen score approximation built from the smoothed
functions in Eq.~\eqref{eq:parzen-kl-field}, not the exact KL drift of those
atomic laws.
Boundary protocols are not covered by the compact-interior statements.

\paragraph{Simplex-preserving retraction.}
For $x\in\mathcal X_\circ$ and $v\in T_x\mathcal X_\circ$, define
\begin{equation}
  [\mathcal R_x(v)]_{\ell i}
  =\frac{x_{\ell i}\exp(v_{\ell i}/x_{\ell i})}
         {\sum_jx_{\ell j}\exp(v_{\ell j}/x_{\ell j})}.
  \label{eq:simplex-retraction}
\end{equation}
If $v=\eta G_xb$, the blockwise centering constant cancels and gives
\begin{equation}
  [\mathcal R_x(\eta G_xb)]_{\ell i}
  =\frac{x_{\ell i}\exp(\eta b_{\ell i})}
         {\sum_jx_{\ell j}\exp(\eta b_{\ell j})}.
  \label{eq:simplex-retraction-covector}
\end{equation}

\begin{purplepropositionbox}
\begin{proposition}[Product-simplex exponentiated retraction]
\label{prop:simplex-retraction}
Equations~\eqref{eq:simplex-retraction} and
\eqref{eq:simplex-retraction-covector} define a $C^\infty$ retraction that
preserves positivity and blockwise normalization; its covector form is
invariant to blockwise constants in $b$.  Uniformly for bounded $b$ on a
compact interior,
\begin{align}
  \mathcal R_x(\eta G_xb)
  &=x+\eta G_xb+O(\eta^2),
  &&\eta\downarrow0,
  \label{eq:simplex-retraction-first-order}\\
  d_{\mathcal X}\!\left(
    \mathcal R_x(\eta G_xb),
    \operatorname{Exp}_x(\eta G_xb)
  \right)
  &=O(\eta^2).
  \label{eq:simplex-retraction-consistency}
\end{align}
\end{proposition}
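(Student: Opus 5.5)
The plan is to verify the algebraic properties directly from Eq.~\eqref{eq:simplex-retraction}, then obtain the two asymptotic statements from a second-order Taylor expansion. The second statement, Eq.~\eqref{eq:simplex-retraction-consistency}, then follows from Proposition~\ref{prop:retraction-consistency} once uniformity is checked.

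First, the structural facts. Each block of $\mathcal R_x(v)$ is a softmax-type normalization of strictly positive numbers $x_{\ell i}\exp(v_{\ell i}/x_{\ell i})$. Hence every entry is positive and each block sums to one. The map is a composition of $\exp$, products, sums and a division by a strictly positive denominator, so it is $C^\infty$ in $(x,v)$ on $\mathcal X_\circ\times T\mathcal X_\circ$. At $v=0$ the numerators are $x_{\ell i}$ and the denominator is $\sum_j x_{\ell j}=1$, so $\mathcal R_x(0)=x$.

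For the covector form I substitute Eq.~\eqref{eq:fr-cometric-components}. This gives $v_{\ell i}/x_{\ell i}=\eta(b_{\ell i}-c_\ell)$ with $c_\ell=\sum_j x_{\ell j}b_{\ell j}$. The factor $e^{-\eta c_\ell}$ is common to numerator and denominator of block $\ell$ and cancels, which yields Eq.~\eqref{eq:simplex-retraction-covector}. Adding a constant $a_\ell$ to block $\ell$ of $b$ likewise multiplies numerator and denominator by $e^{\eta a_\ell}$, so the covector form is invariant to blockwise constants.

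Next, the first-order property. Differentiate $s\mapsto[\mathcal R_x(sv)]_{\ell i}$ at $s=0$. The numerator's derivative is $x_{\ell i}\cdot v_{\ell i}/x_{\ell i}=v_{\ell i}$. The denominator's derivative is $\sum_j v_{\ell j}=0$, because $v$ is tangent and so has zero sum in each block. The quotient rule therefore gives $v_{\ell i}$, so $\mathcal R$ is a retraction.

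For Eq.~\eqref{eq:simplex-retraction-first-order}, set $F(\eta)=\mathcal R_x(\eta G_xb)$. We have $F(0)=x$ and $F'(0)=G_xb$, and Taylor's theorem with remainder gives $|F(\eta)-x-\eta G_xb|\le\tfrac12\eta^2\sup_{|s|\le\eta}|F''(s)|$. The key point is uniformity. From Eq.~\eqref{eq:simplex-retraction-covector}, $F''$ is a polynomial expression in $x$, $b$ and $e^{\pm s b}$ divided by a power of the denominator. On $\mathcal X_\delta$ with $\|b\|_\infty\le B$ and $|s|\le1$, the denominator is bounded below by $e^{-B}$. Hence $\sup|F''|$ is bounded by a constant $C(B)$ independent of $x$ and of $b$ within the bounded set.

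Finally, the consistency bound Eq.~\eqref{eq:simplex-retraction-consistency}. Here I would invoke Proposition~\ref{prop:retraction-consistency}, but I must make its $O(\eta^2)$ uniform over $x\in\mathcal X_\delta$ and bounded $b$; this is the main obstacle, because the open simplex is incomplete and $\operatorname{Exp}_x$ is only defined for short vectors. Two facts handle this. The tangent vector $G_xb$ has Fisher--Rao norm bounded by $C(B)$ uniformly. The geodesic ball of a fixed small radius $r_0$ around any point of $\mathcal X_\delta$ stays inside $\mathcal X_{\delta/2}$, since $d_{\mathrm{FR}}$ controls $|\sqrt{a_i}-\sqrt{b_i}|$. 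Thus for $\eta\le r_0/C(B)$ both $\operatorname{Exp}_x(\eta G_xb)$ and $\mathcal R_x(\eta G_xb)$ lie in the compact set $\mathcal X_{\delta/2}$, where geodesics are smooth in $(x,v,\eta)$. Both curves agree to first order, so in ambient coordinates $|\mathcal R_x(\eta G_xb)-\operatorname{Exp}_x(\eta G_xb)|\le C'\eta^2$, with $C'$ a bound on the second derivatives of the two smooth maps over this compact parameter set. Concretely, the geodesic is explicit via the sphere map $a\mapsto2\sqrt a$ and Eq.~\eqref{eq:sphere-distance-exp}. On $\mathcal X_{\delta/2}$ the metric $\sum_i u_i^2/a_i$ is comparable to the Euclidean one, with constant $2/\delta$. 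Hence $d_{\mathcal X}$ is bounded by a constant times the ambient distance there, which gives the uniform $O(\eta^2)$.
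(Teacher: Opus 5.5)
Your proposal is correct and follows the same route as the paper's proof. You verify positivity, normalization, smoothness and the retraction property directly, cancel the blockwise centering constant to get the covector form, use a uniform Taylor expansion for Eq.~\eqref{eq:simplex-retraction-first-order}, and then apply Proposition~\ref{prop:retraction-consistency} with uniformity from compactness. Your explicit bounds (the denominator bound $e^{-B}$, the uniform Fisher--Rao norm of $G_xb$, and the $r_0$-ball argument placing $\operatorname{Exp}_x(\eta G_xb)$ in $\mathcal X_{\delta/2}$) make precise what the paper leaves to ``compactness''. The $r_0$-ball argument does not by itself place the retraction point in $\mathcal X_{\delta/2}$; the direct bound $[\mathcal R_x(\eta G_xb)]_{\ell i}\ge x_{\ell i}e^{-2\eta B}$ from the covector form does.
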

\end{purplepropositionbox}
The proof is in Appendix~\ref{app:proof:finite-update}.  This retraction is
not the exact Fisher--Rao exponential map: it preserves the open simplex for
finite $\eta$, but a prescribed margin $\mathcal X_\delta$ still requires
step-size control.

\paragraph{Objective costs and kernels.}
The Fisher--Rao distance defines the Wasserstein ground geometry.  The
separate choices used inside the Sinkhorn and MMD objectives are
\begin{align}
  c_{\mathrm{ch}}(x,y)&=\frac12\|\sqrt{x}-\sqrt{y}\|_2^2,
  \label{eq:chordal-cost}\\
  \kappa_\sigma(x,x')
  &=\exp\!\left[-\frac{\|\sqrt{x}-\sqrt{x'}\|_2^2}{2\sigma^2}\right],
  \qquad \sigma>0.
  \label{eq:sqrt-kernel}
\end{align}
On compact interiors, the Gaussian kernel is characteristic and the Gibbs
kernel generated by $c_{\mathrm{ch}}$ is positive universal.

\paragraph{Argmax decoding.}
The argmax decoder is
\[
  [D_{\mathrm{argmax}}(x)]_\ell
  =\operatorname*{arg\,max}_{i\in[d]}x_{\ell i}.
\]
For a relaxed law $q$, it induces $(D_{\mathrm{argmax}})_\#q$.  Although $K$ samples positions
conditionally independently given $x$, mixing over the joint $q$ need not
factorize.  For $0\leq\tau<1$,
\[
  D_{\mathrm{argmax}}(E_\tau(e_y))=y.
\]
Hence argmax maps $p_\tau$ exactly back to $p_{\mathrm{data}}$.

\section{Experimental details}
\label{app:experimental-details}

\subsection{Geospatial data on the sphere}
\label{app:experimental-details:sphere}

\paragraph{Data and optimization.}
We use the earthquake, volcano, fire, and flood datasets from the Earth
benchmark~\citep{mathieu2020riemannian}.  Latitude--longitude pairs are mapped
to unit vectors
$(\cos\phi\cos\lambda,\cos\phi\sin\lambda,\sin\phi)\in\mathbb S^2$.
A seed-42 split assigns approximately $80\%/10\%/10\%$ of each dataset to
training, validation, and test, only the training split is repeated to obtain
adequate minibatches.  Batches contain 1,024 points.

The generator maps $z\sim\mathcal N(0,I_3)$ through a width-32 SiLU MLP and
normalizes its output onto $\mathbb S^2$.  We use AdamW with
$\beta=(0.9,0.95)$, weight decay $10^{-2}$, 2,500 warmup steps, gradient
clipping at one, and EMA decay $0.999$.  The EMA checkpoint with the lowest
mean validation MMD over three draws is retained, with validation every 500
steps and a budget of at most 20 epochs.  Dataset-specific settings are
summarized below. $\gamma_{\mathrm{MMD}}$ and $\gamma_{\mathrm{KL}}$ are the selected
chordal-Gaussian bandwidth parameters.

\begin{table}[!ht]
  \caption{Final Earth settings.}
  \label{tab:earth-hyperparameters}
  \centering
  \small
  \setlength{\tabcolsep}{8pt}
  \renewcommand{\arraystretch}{1.05}
  \begin{tabular}{@{}lrrrr@{}}
    \toprule
    Dataset & LR & $T_{\mathrm{KGDM}}$ & $\varepsilon_{\mathrm{OT}}$
      & $\gamma_{\mathrm{MMD}}/\gamma_{\mathrm{KL}}$ \\
    \midrule
    Earthquake & $5.17\!\times\!10^{-4}$ & .7178 & .0322 & 2/2 \\
    Volcano    & $5.24\!\times\!10^{-4}$ & .3250 & .0264 & 1/1 \\
    Fire       & $9.70\!\times\!10^{-4}$ & .6332 & .2005 & 1/1 \\
    Flood      & $5.52\!\times\!10^{-4}$ & .1291 & .0333 & 4/4 \\
    \bottomrule
  \end{tabular}
\end{table}

\paragraph{Computational resources.}
All Earth runs were executed using a single NVIDIA RTX PRO 6000.

\paragraph{Implemented fields.}
For every live query batch $Q$, we draw a target batch $P$ and an independent,
detached model batch $Q'$.  R-WGF-Sinkhorn uses balanced entropic OT with

\[
  c(x,y)=\tfrac12 d_{\mathbb S^2}(x,y)^2
\]

and subtracts the $Q$--$Q'$ self field from the $Q$--$P$ cross field.
The Sinkhorn solves use log-domain iterations, marginal tolerance $10^{-6}$,
and a 5,000-iteration cap.
R-WGF-MMD minimizes half the squared MMD with

\[
  k_\gamma(x,y)=\exp[-\gamma\|x-y\|_2^2]
\]

and estimates the target-attraction and model-repulsion terms by arithmetic
bank averages.  R-WGF-KL uses the same kernel to form Parzen estimates
$\widehat p_\gamma$ and $\widehat q_\gamma$, and follows the spherical score
difference
$\operatorname{grad}_{\mathbb S^2}\log\widehat p_\gamma-
 \operatorname{grad}_{\mathbb S^2}\log\widehat q_\gamma$.

All three fields are evaluated in FP32.  With nominal step size $\eta=1$, we
cap each tangent velocity at the dataset-specific $\delta_{\max}$, apply the
spherical exponential map, detach
the resulting teacher, and fit it with squared geodesic distance.  The matched
KGDM control~\citep{esteban2026kernel} uses the same data, generator,
optimizer, and validation protocol, but retains its geodesic Laplace kernel
and same-query leave-one-out reference construction.

\subsection{Binary-MNIST image generation}
\label{app:experimental-details:mnist}

\paragraph{Representation and common training setup.}
An image is represented as
$x=(x_1,\ldots,x_{784})\in(\Delta^1)^{784}$.  The generator is a
$6.34$M-parameter CatDrift-style patch Transformer with $4\times4$ patches,
width 256, six blocks, four attention heads, eight registers, four noise
channels, and two output categories.  Training uses batches of 4,096, AdamW
with learning rate $2\times10^{-4}$ and $\beta=(0.9,0.999)$, 2,000 warmup
updates, gradient clipping at one, bfloat16 generator evaluation, FP32 fields,
and EMA decay $0.9999$.

For all three objectives, the Fisher--Rao cometric converts the estimated
descent covector into a blockwise tangent velocity.  We use the common teacher

\begin{equation}
  \widetilde x
  =\mathcal N\!\left(
    \max\{x+0.05\,\widehat V^{\mathrm{cat}}(x),10^{-8}\}
  \right),
  \label{eq:mnist-projected-euler}
\end{equation}

where $\mathcal N$ renormalizes every binary block, and fit its stopped value
with the sum of sitewise squared-Hellinger losses.  This clipped projected
update is an empirical boundary discretization, rather than the exact
Fisher--Rao exponential step covered by the compact-interior analysis.

Define the mean sitewise Hellinger cost and the frozen perceptual cost by

\[
  \overline H^2(x,y)
  :=\frac1{784}\sum_{s=1}^{784}
  \frac12\|\sqrt{x_s}-\sqrt{y_s}\|_2^2,
  \qquad
  c_{\mathrm{hyb}}(x,y)
  :=\overline H^2(x,y)+\|\bar\phi(x)-\bar\phi(y)\|_2.
\]

Here $\bar\phi$ is the normalized, mean-pooled representation of a frozen
masked autoencoder evaluated on unmasked images.  The objective-specific
settings are listed below.  For a one-hot target $y$,
$\epsilon_{\mathrm{data}}$ denotes the relaxation
$y\leftarrow(1-\epsilon_{\mathrm{data}})y+
(\epsilon_{\mathrm{data}}/2)\mathbf 1_2$ and zero denotes a hard boundary target.

\begin{table}[!ht]
  \caption{Binary-MNIST objective implementations.}
  \label{tab:mnist-objective-settings}
  \centering
  \small
  \setlength{\tabcolsep}{5pt}
  \renewcommand{\arraystretch}{1.08}
  \begin{tabular}{@{}lcc@{}}
    \toprule
    Method & $\epsilon_{\mathrm{data}}$ & Scale \\
    \midrule
    R-WGF-Sinkhorn & 0 & $\varepsilon_{\mathrm{OT}}=.005$ \\
    R-WGF-KL & $.001$ & $\rho=.05$ \\
    R-WGF-MMD & $.001$ & $\rho=.05$ \\
    \bottomrule
  \end{tabular}
\end{table}

Sinkhorn uses 4,096 real and model particles. Its solver uses at most 1,000 iterations,
tolerance $10^{-4}$, and convergence checks every five iterations.
KL and MMD use 4,096 live, target, and
independent detached model particles.

\paragraph{Training lineage and checkpoint selection.}
The reported Sinkhorn model continues a 26,000-step Hellinger-only run:
hard data and the hybrid MAE cost are introduced together, and the EMA
checkpoint at global step 36,000 is retained.  The KL model continues a
38,000-step Hellinger-kernel score run, adds the MAE term, and retains step
42,000.  These two results are therefore continuations rather than
from-scratch ablations of the final objective.  R-WGF-MMD is trained from
scratch for 50,000 updates and selects step 40,000. 

\paragraph{FID protocol.}
We follow the PairFlow-compatible protocol~\citep{park2025pairflow}: argmax
decode 1,000 EMA samples and compare them with all 60,000 Binary-MNIST training
images.  After checkpoint selection, each fixed checkpoint is evaluated on
nine new sampling seeds.  Consequently, the standard deviations in
Table~\ref{tab:mnist-one-step-main} measure sampling variability for one
trained checkpoint, not variability across training runs.

\paragraph{Computational resources.}
All Binary-MNIST training jobs use one NVIDIA RTX PRO 6000.

\begin{figure}[!ht]
  \centering
  \begin{minipage}[t]{0.31\linewidth}
    \centering
    \includegraphics[width=\linewidth]
      {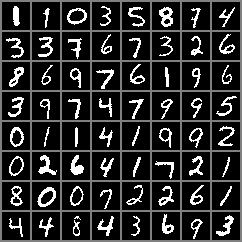}\\[-1pt]
    \small R-WGF-Sinkhorn
  \end{minipage}\hfill
  \begin{minipage}[t]{0.31\linewidth}
    \centering
    \includegraphics[width=\linewidth]
      {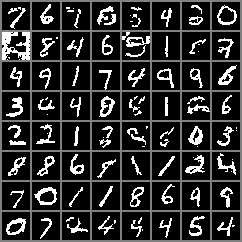}\\[-1pt]
    \small R-WGF-MMD
  \end{minipage}\hfill
  \begin{minipage}[t]{0.31\linewidth}
    \centering
    \includegraphics[width=\linewidth]
      {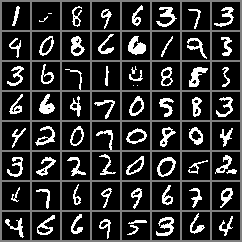}\\[-1pt]
    \small R-WGF-KL
  \end{minipage}
  \caption{\textbf{Full Binary-MNIST sample grids.}
  Fixed, uncurated $8\times8$ argmax grids from the selected EMA checkpoints.}
  \label{fig:mnist-full-sample-grid}
\end{figure}
\FloatBarrier

\subsection{Discretized CIFAR-10 image generation}
\label{app:experimental-details:cifar}

\paragraph{Representation and generator.}
An image is represented as $x\in(\Delta^{255})^{3072}$: one distribution over
256 intensities for each of the $32\times32\times3$ subpixels of
discretized CIFAR-10~\citep{krizhevsky2009learning}.  The generator is a U-Net with 128
base channels, channel multipliers $(1,2,2,2)$, two residual blocks per scale,
one attention resolution, and dropout $0.1$.  It maps Gaussian noise of image
shape to a truncated discretized logistic distribution for every
subpixel~\citep{austin2021structured}, whose scale is bounded to $[0.5,8]$
intensity bins.

\paragraph{Objective.}
We use R-WGF-Sinkhorn with the covector retraction
Eq.~\eqref{eq:simplex-retraction-covector} and the multi-feature loss
Eq.~\eqref{eq:multi-feature-loss} with equal weights, $\eta=10^{-5}$, and hard
data targets.  As for Binary-MNIST, $d_{\mathcal M}^2$ is the sum of sitewise
squared-Hellinger distances.  The feature maps are the Hellinger coordinates
on the product simplex, the expected RGB image and its channel RMS, and
multi-layer features of a frozen DINOv2 ViT-S/14~\citep{oquab2024dinov2}
applied to the expected RGB image upsampled to $224\times224$.

For each live batch, we draw a data batch and an independent, detached model
batch.  Sinkhorn is solved on blocks of $64\times64$ particles per GPU, with
the squared Euclidean feature cost divided by its batch scale and
$\varepsilon_{\mathrm{OT}}=0.05$ in these units.  The solver runs at most ten
iterations in FP64 with tolerance $10^{-5}$.  Each drift is normalized by its
Fisher--Rao RMS over the microbatch, as in
Eq.~\eqref{eq:multi-feature-loss}.

\paragraph{Optimization.}
We use Adam with learning rate $3\times10^{-4}$, $\beta=(0.9,0.999)$, no
weight decay, 5,000 warmup updates and a constant schedule afterwards,
gradient clipping at one, bfloat16 precision, and EMA decay $0.9999$.  The
global batch is 8,192 images, with 64 images per GPU and gradient
accumulation.

\paragraph{FID protocol.}
About every 500 updates, we draw 5,000 EMA samples with one NFE and argmax
decoding.  FID is computed against the statistics of the CIFAR-10 training set
with the PairFlow implementation, and we retain the checkpoint with the lowest
FID.  We then evaluate this checkpoint on five new sampling seeds, with 5,000
and with 50,000 samples, and report the mean and standard deviation.

\paragraph{Computational resources.}
Training uses eight NVIDIA RTX PRO 6000 GPUs.

\paragraph{Pilot runs.}
We trained R-WGF-Sinkhorn, R-WGF-KL,
and R-WGF-MMD from scratch with the same recipe.  KL and MMD replace the Sinkhorn plan by Parzen and kernel
weights with bandwidth $0.05$.  Figure~\ref{fig:cifar-pilot-runs} shows
FID-5k and IS during the first 3,000 updates.  Sinkhorn
improves steadily and reaches the lowest FID, 179.9 at update 2,665.  MMD is briefly better at update 1,065 but
then deteriorates, and KL deteriorates after update 2,133.  The KL run
stopped with non-finite costs.

\begin{figure}[!ht]
  \centering
  \pgfplotsset{
    pilot/.style={
      width=0.48\linewidth, height=0.34\linewidth,
      xlabel={Training updates}, xmin=0, xmax=3000,
      xtick={0,1000,2000,3000},
      scaled x ticks=false,
      tick label style={font=\scriptsize}, label style={font=\small},
      axis lines=left, grid=major,
      grid style={line width=0.2pt, draw=black!12},
      every axis plot/.append style={line width=1pt, mark size=2pt},
    },
  }
  \begin{tikzpicture}
    \begin{axis}[pilot, ylabel={FID-5k $\downarrow$}, ymin=150, ymax=330,
      legend style={font=\scriptsize, draw=none, fill=white, fill opacity=0.8,
        text opacity=1, at={(0.5,0.97)}, anchor=north},
      legend cell align=left]
      \addplot[color=pilotsink, mark=*] coordinates
        {(533,238.96) (1066,215.91) (1599,197.76) (2133,181.25)
         (2665,179.90)};
      \addlegendentry{R-WGF-Sinkhorn}
      \addplot[color=pilotkl, mark=square*, dashed, mark options={solid}] coordinates
        {(533,275.53) (1066,263.93) (1599,232.50) (2133,206.04)
         (2666,264.02)};
      \addlegendentry{R-WGF-KL}
      \addplot[color=pilotmmd, mark=triangle*, mark size=2.6pt, densely dotted,
        mark options={solid}] coordinates
        {(531,246.96) (1065,202.86) (1598,243.67) (2131,259.36)
         (2665,319.27)};
      \addlegendentry{R-WGF-MMD}
    \end{axis}
  \end{tikzpicture}\hfill
  \begin{tikzpicture}
    \begin{axis}[pilot, ylabel={IS $\uparrow$}, ymin=1.8, ymax=4.3]
      \addplot[color=pilotsink, mark=*] coordinates
        {(533,2.549) (1066,2.905) (1599,2.983) (2133,3.677)
         (2665,4.040)};
      \addplot[color=pilotkl, mark=square*, dashed, mark options={solid}] coordinates
        {(533,2.115) (1066,2.546) (1599,2.660) (2133,3.300)
         (2666,2.641)};
      \addplot[color=pilotmmd, mark=triangle*, mark size=2.6pt, densely dotted,
        mark options={solid}] coordinates
        {(531,2.981) (1065,4.063) (1598,3.235) (2131,2.488)
         (2665,2.203)};
    \end{axis}
  \end{tikzpicture}
  \caption{\textbf{Discretized CIFAR-10 pilot runs.}  FID-5k (left) and IS
  (right) of R-WGF-Sinkhorn, R-WGF-KL, and R-WGF-MMD trained from scratch with
  the same recipe, evaluated with one NFE and argmax decoding.}
  \label{fig:cifar-pilot-runs}
\end{figure}

\begin{figure}[!ht]
  \centering
  \includegraphics[width=0.5\linewidth]
    {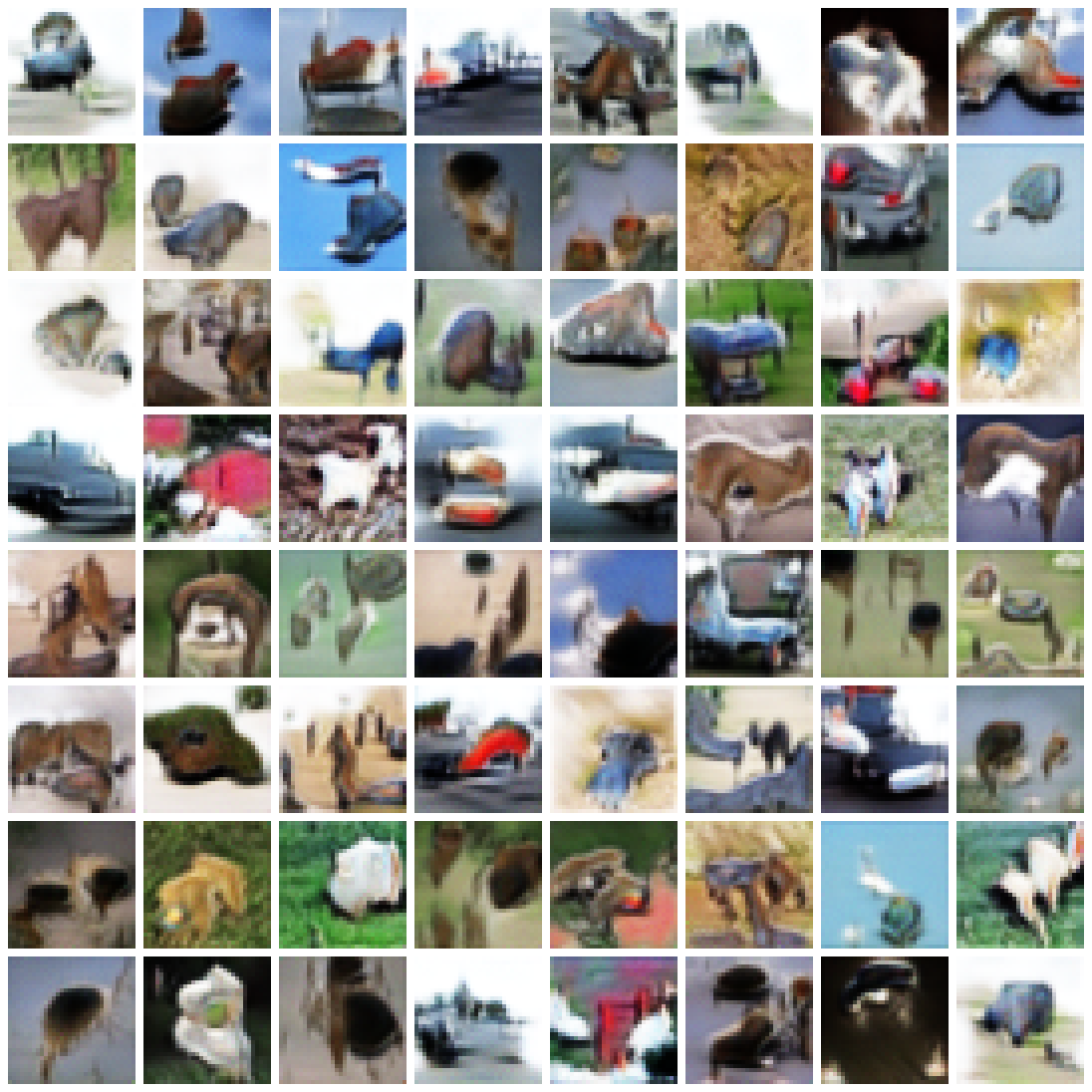}
  \caption{\textbf{Full discretized CIFAR-10 sample grid.}
  Fixed, uncurated $8\times8$ argmax grid from the selected R-WGF-Sinkhorn EMA
  checkpoint with one NFE.}
  \label{fig:cifar-full-sample-grid}
\end{figure}
\FloatBarrier

\subsection{LM1B language modeling}
\label{app:experimental-details:language}

\paragraph{Data and model.}
We use LM1B~\citep{chelba2013one} with the
\texttt{bert-base-uncased} tokenizer and length-128 packed sequences, following
the FMLM setup~\citep{lee2026flow}.  After excluding masked special and
unused entries, the active vocabulary has 29,525 tokens.  The generator is a
139M-parameter bidirectional DDiT with 12 blocks, width 768, and 12 attention
heads.  We use AdamW with learning rate $3\times10^{-5}$, 750 warmup steps,
global batch size 256, bfloat16 precision, and EMA decay $0.9999$.
The generator output
$x=(x_1,\ldots,x_{128})\in(\Delta_\circ^{29524})^{128}$ represents a joint
relaxed law over the entire sequence; the product geometry does not assume
independence between positions.

\paragraph{Computational resources.}
Stage~1 uses four and Stage~2 uses eight NVIDIA RTX PRO 6000 GPUs, with
PyTorch Lightning DDP and bfloat16 training.

\paragraph{Teacher update.}
The live batch uses an independent persistent target bank and detached model
bank.  Every implemented field is converted with the blockwise Fisher--Rao
cometric, after which we apply

\begin{align}
  \widetilde x_\ell
  &=\mathcal N_{\mathcal V}\!\left(
    \max\{x_\ell+\eta\widehat V_\ell^{\mathrm{cat}}(x),10^{-8}\}
  \right),
  \nonumber\\
  \mathcal L_{\mathrm{LM1B}}
  &=\frac1B\sum_{b=1}^B\sum_{\ell=1}^{128}
  \left\|
    \sqrt{[x_{b\ell}]_{10^{-12}}}
    -\sqrt{[\operatorname{sg}(\widetilde x_{b\ell})]_{10^{-12}}}
  \right\|_2^2.
  \label{eq:lm1b-update-loss}
\end{align}

Here $[u]_\delta=\max\{u,\delta\}$ componentwise on the active vocabulary and
$\mathcal N_{\mathcal V}$ renormalizes each token block.  As in Binary-MNIST,
this projected update and chordal loss are finite-sample approximations to the
ideal retraction and Fisher--Rao tracking objective.

\paragraph{Stage 1: objective screening.}
Starting from the same released FLM checkpoint, we compare
R-WGF-Sinkhorn and R-WGF-KL schedules.  KL uses a five-scale Parzen kernel on
sequence-normalized square-root distance, with bandwidths
$\{.12,.22,.40,.70,1.00\}$ and teacher step $\eta=.30$, its data relaxation
changes from $.075$ to $.005$ at step 15,000.  Sinkhorn uses hard
targets, $\varepsilon_{\mathrm{OT}}=.005$, and changes $\eta$ from $.03$ to
$.01$ at the same boundary.  Both use banks of capacity 1,024.  A fixed
128-sequence panel is evaluated every 1,500 updates with argmax.  Sinkhorn attains the
lower value of GenPPL with reasonable Entropy and is therefore selected for Stage 2.

\begin{figure}[!ht]
  \centering
  \includegraphics[width=0.92\linewidth]
    {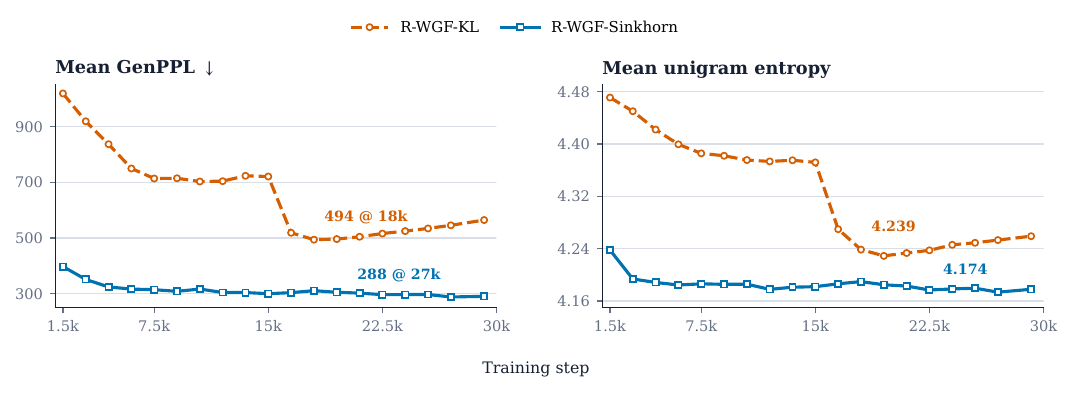}
  \caption{\textbf{LM1B objective screening.} Labels mark the lowest observed mean GenPPL.}
  \label{fig:lm1b-kl-sinkhorn-trajectories}
\end{figure}

\paragraph{Stage 2: selected Sinkhorn run.}
Stage~2 starts from the EMA weights of an earlier Stage 1 model. We use the multi-feature loss
Eq.~\eqref{eq:multi-feature-loss} with 23 equally weighted feature maps.
Twenty-one maps come from layers 4, 8, and 12 of a frozen
\texttt{bert-base-uncased} encoder, whose input at each position is the
expected token embedding under $x$.  For each layer, we use the first 127
position vectors, and their mean and standard deviation over the whole
sequence and over windows of two and four tokens.  The remaining two maps are
the Hellinger coordinates on the product simplex and their RMS.

Every map has its own Sinkhorn solve on blocks of $64\times64$ sequences.  The
squared Euclidean cost is divided by its batch scale, and
$\varepsilon_{\mathrm{OT}}=0.05$ in these units.  The solver runs at most ten
iterations in FP64.  Real sequences come from a bank of 2,048 sequences per
GPU, and the model reference is a fresh, detached batch.  We use hard target
tokens, $\eta=10^{-3}$, and a global batch of 8,192 sequences on eight GPUs,
with 64 sequences per GPU and gradient accumulation.

\paragraph{Evaluation.}
Every 1,500 updates, we evaluate 128 unconditional EMA samples with one NFE,
temperature one, argmax decoding.  GenPPL is
computed with \texttt{gpt2-large} and entropy is mean within-sequence unigram
entropy.

\subsubsection{Qualitative examples}
\label{app:language-qualitative}

\begin{quote}\small\sloppy
[CLS] during the september month allows the state party to make their own.
[CLS] when a program for health care and health care and the \$ 250, 000 money
adhere to police, who will not make their way. [CLS] the department owns its
newp - by - go system, which incorporates the federal law. [CLS] the 40 - year
old man is expected to retain his parents and try to promote the health of
them. [CLS] al per bush, the former staff chief at york york university,
jumped at a good 250 \% in december 2005 following a court case. [CLS] i think
little you should even know if i know its role and even know [CLS]
\end{quote}

\begin{quote}\small\sloppy
[CLS] per head. [CLS] it was the refusal of members of north - group and step
into iraq, though. [CLS] little africa men'department reported that this
company, however, has urged inc international group to compete at the
company's global financial security and security department, while only both
india and china are now preparing to incorporate workers or workers, who
invariably depend on adherence to the federal economy and the ability to
accelerate change. [CLS] both men are expected to be taken killed, or their
ability to take money. [CLS] india is expected to exceed 7. 4 billion percent
of 1. 4, 500 and 8 percent. [CLS] there was `` [CLS]
\end{quote}

\begin{quote}\small\sloppy
[CLS] to encourage it. [CLS] five years, he championed the life of april d.
york if you are experimenting and promotes the life of both. [CLS] they think
that it is just out across the way. [CLS] obama and the democrats vowed to
introduce it, including obama's refusal of officials to know them and the
preparing for the change. [CLS] obama, bush, the second national 500 leader,
owns the state president of new york, who was making their first win in
december 2005, following a hit and hit for him at the time of the team's 500
and 50 - 0 home game on this 8 based depending on the return [CLS]
\end{quote}

\begin{quote}\small\sloppy
[CLS] or d. bush to make a home win for the second half. [CLS] i believe that
the world points should be based at the september final weekend on obama's
adherence to the federal law, which is already my return in iraq. [CLS] obama,
based to an police spokesman and the state of south africa, has commented on
the decision. [CLS] it is also a good game, too, and a one - quarter report on
nearly out of 4, 000 - half - 2 per - downp per gop, who had only two climbs.
[CLS] the world bank made the bank deal after a refusal to compete with the
north bank [CLS]
\end{quote}

\begin{quote}\small\sloppy
[CLS] 10, 000 to the national democrats party and system, or democrats law,
system which invariably promotes the health of the use of a federal bank.
[CLS] `` the health care report, ''. d. bush, says children are killed in
south africa to the weekend of tuesday. [CLS] the 1950s clashed with the role
of a care system that ensures you get a game across the way. [CLS] both womens
descended from the little africa university. [CLS] mr. york employs members of
the national security team at a time of 50 percent, while police recently
received eight and eight half, creating a federal police report over the month
took up some money [CLS]
\end{quote}

\begin{quote}\small\sloppy
[CLS] ensuring that there may be a good way is for him. [CLS] mr york, who has
only been on experimenting, has squealed on workers from little south gop and
iran - group like iraq and iran, preferring to qualify for the good months of
40 years ; you've really know my parents. `` 40 - year - old, april, owns
president of the world's biggest 500 group in the month of 2005. [CLS] whether
dr assumes they can even all hesitate to even interfere with the role of
suchs. [CLS] the decision was recently likened by a security spokesman and
spokesman ( 1. com ) to a world bank. [CLS]
\end{quote}

\section{Additional Related Work}
\label{app:related-work}

This appendix complements Section~\ref{sec:related-work} with a detailed
comparison to inverse distillation and a broader overview of accelerated
categorical generation.

\paragraph{Inverse distillation and IDLM.}
Inverse distillation transfers iterative generation from a pretrained teacher
into the training of a few- or one-step generator.  IBMD learns a
generator-induced endpoint coupling whose bridge-matching solution reproduces
a given bridge model~\citep{gushchin2025inverse}, RealUID extends this idea to
diffusion, flow, bridge, and stochastic-interpolant models with real-data
supervision~\citep{kornilov2026universal}, and Residual Shifting Distillation
applies it to one-step super-resolution~\citep{selikhanovych2026onestep}.
Its categorical instance, IDLM~\citep{li2026idlm}, is the variant most closely
related to our work.

IDLM is the closest categorical precursor because it also optimizes an
explicit functional of the current generator law and uses simplex-valued
relaxations.  Under its ideal population assumptions, its inverse-distillation
functional is a model-to-target reverse KL between diffusion path measures
and bounds the endpoint reverse KL.  Using our endpoint notation
$(q_\theta,p)$, and writing
$x=\operatorname{softmax}(\ell)$ and teacher--fake advantage
$a_t=\log f^\star(u_t,t)-\log f(u_t,t)$, its local logit gradient contains the
same centered-simplex operator as the Fisher--Rao cometric:
\begin{align}
  \mathcal L_{\mathrm{IDLM}}
  &=\mathcal L(f^\star,q_\theta)-\min_f\mathcal L(f,q_\theta)
    =\operatorname{KL}(\mathbb P_\theta\Vert\mathbb P^\star)
    \geq \operatorname{KL}(q_\theta\Vert p),
  \nonumber\\
  \frac{\partial\mathcal L_{\mathrm{IDLM-MDLM}}}{\partial\ell_i}
  &=-\lambda_t x_i\bigl((a_t)_i-\langle x,a_t\rangle\bigr)
    =-\lambda_t[G_{\mathrm{FR}}(x)a_t]_i,
  \qquad G_{\mathrm{FR}}(x)=\operatorname{Diag}(x)-xx^\top.
\end{align}
Thus IDLM and R-WGF share an explicit law-level objective, reverse-KL
orientation, shared-latent simplex outputs, and an algebraically identical
centering operator~\citep{li2026idlm}.  The interpretation differs: IDLM
backpropagates a frozen teacher--fake advantage through generator logits,
whereas R-WGF maps an endpoint-law first variation through a chosen
Fisher--Rao cometric to an intrinsic particle velocity and then amortizes that
transport.  Our MMD and Sinkhorn fields also require no diffusion teacher.
Our executable KL branch instead uses a Parzen score approximation.  The
methods are therefore closely related, but neither is a special case of the
other.

\paragraph{Accelerated categorical generation.}
Discrete, Dirichlet, Fisher, and statistical-manifold flow matching provide
other routes to categorical generation~\citep{gat2024discrete,
stark2024dirichlet,davis2024fisher,cheng2024categorical}.  Recent work obtains
one- or few-evaluation generation using learned flow maps, direct
latent--data couplings, and diffusion--autoregressive refinement
~\citep{roos2026categorical,potaptchik2026discrete,lee2026flow,
park2025pairflow}.  Dual-Stream Decoding instead accelerates
a frozen autoregressive model at inference through parallel proposal and
refinement~\citep{abulkhanov2026dualstream}.  Our claim is therefore not
priority for one-step categorical generation, but an objective-derived
Fisher--Rao WGF on product simplices.

\section{Proofs and derivations}
\label{app:proofs}

This appendix gives short derivations of the results used in
Section~\ref{sec:method}.  Throughout, we assume the regularity needed to
differentiate under integrals and apply the transport chain rule.  Boundary
terms are taken to vanish by a no-flux or decay condition.  As in the main
text, $p$ and $q$ denote the target and model laws.

\subsection{Steepest Riemannian velocity}
\label{app:proof:steepest-velocity}

\begin{proof}[Proof of Proposition~\ref{prop:riemannian-steepest-drift}]
Let $a=\operatorname{grad}_{\mathsf g}\psi_{p,q}$ and use the
$q$-weighted inner product from Eq.~\eqref{eq:law-tangent-norm}.  By the
definition of the Riemannian gradient, the objective in
Eq.~\eqref{eq:general-riemannian-velocity} is
\begin{align*}
  J_q(u)
  &=\langle a,u\rangle_{q,\mathsf g}
    +\frac12\|u\|_{q,\mathsf g}^2\\
  &=\frac12\|u+a\|_{q,\mathsf g}^2
    -\frac12\|a\|_{q,\mathsf g}^2.
\end{align*}
The first term is minimized uniquely when $u=-a$.  Hence
\[
  V_{p,q}=-\operatorname{grad}_{\mathsf g}\psi_{p,q}
  \qquad (q\text{-a.e.}).
\]
Adding a constant to the first variation does not change this velocity because
its spatial gradient is zero.
\end{proof}

\subsection{Population dissipation}
\label{app:proof:population-dissipation}

\begin{proof}[Proof of Proposition~\ref{prop:population-dissipation}]
Set
$a_t=\operatorname{grad}_{\mathsf g}\psi_{p,q_t}
\in L^2(q_t;T\mathcal M)$.  The assumed transport chain rule and the identity
$V_{p,q_t}=-a_t$ give
\begin{align*}
  \frac{d}{dt}\mathcal F_p(q_t)
  &=\int_{\mathcal M}
    \mathsf g_x(a_t(x),V_{p,q_t}(x))q_t(x)
    \,d\mathrm{vol}_{\mathsf g}(x)\\
  &=-\int_{\mathcal M}\|a_t(x)\|_{\mathsf g,x}^2q_t(x)
    \,d\mathrm{vol}_{\mathsf g}(x)
  =-\mathcal D_p(q_t)\leq0.
\end{align*}
This is exactly Eq.~\eqref{eq:population-descent}.
\end{proof}

Provided law differentiation along $q_t$ is justified, the chain rule follows
by substituting the continuity equation and integrating by parts:
\begin{align*}
  \frac{d}{dt}\mathcal F_p(q_t)
  &=-\int_{\mathcal M}\psi_{p,q_t}
    \operatorname{div}_{\mathsf g}(q_tV_{p,q_t})
    \,d\mathrm{vol}_{\mathsf g}\\
  &=\int_{\mathcal M}
    \mathsf g(\operatorname{grad}_{\mathsf g}\psi_{p,q_t},V_{p,q_t})q_t
    \,d\mathrm{vol}_{\mathsf g},
\end{align*}
where the boundary term vanishes under the stated no-flux or decay condition.
The proposition concerns smooth density paths; it does not by itself claim
finite-step descent or convergence of the neural approximation.

\subsection{Fisher--Rao cometric and distance}
\label{app:proof:fr-cometric}

\begin{proof}[Derivation]
Fix $a\in\Delta_\circ^{d-1}$ and let $h\in\mathbb R^d$ represent a covector.
Its Riemannian-gradient vector $w$ must satisfy
\[
  \sum_i\frac{w_i u_i}{a_i}=h^\top u
  \quad\text{for every }u\text{ such that }\mathbf1^\top u=0.
\]
Therefore $w_i/a_i-h_i$ is constant across coordinates.  Writing this
constant as $-\bar h_a$ and imposing $\sum_iw_i=0$ gives
\[
  \bar h_a=\sum_i a_i h_i,
  \qquad
  w_i=a_i(h_i-\bar h_a).
\]
Equivalently,
\[
  w=(\operatorname{Diag}(a)-aa^\top)h
   =G_{\mathrm{FR}}(a)h,
\]
which proves Eqs.~\eqref{eq:fr-cometric} and
\eqref{eq:fr-cometric-components}.  The formula also makes tangency and
invariance to $h\mapsto h+c\mathbf1$ immediate.

For the distance, use the square-root map
\[
  \Phi(a)=2\sqrt a.
\]
Since $D\Phi_a[u]_i=u_i/\sqrt{a_i}$,
\[
  \langle D\Phi_a[u],D\Phi_a[v]\rangle
  =\sum_i\frac{u_iv_i}{a_i}
  =\langle u,v\rangle_{\mathrm{FR},a}.
\]
Thus $\Phi$ is an isometry onto the positive orthant of the sphere of radius
$2$.  The spherical angle between $\Phi(a)$ and $\Phi(b)$ is
$\arccos(\sum_i\sqrt{a_ib_i})$.  Because $a$ and $b$ are positive, the shorter
great-circle arc also remains in the positive orthant.  It is therefore an
admissible simplex path, so
\[
  d_{\mathrm{FR}}(a,b)
  =2\arccos\!\left(\sum_i\sqrt{a_ib_i}\right).
\]
For a product of simplices, energies add blockwise, yielding the product
distance in Eq.~\eqref{eq:fr-distance}.
\end{proof}

\subsection{Sinkhorn first variation and intrinsic drift}
\label{app:proof:sinkhorn-drift}

Let $c$ be a symmetric $C^1$ cost on a compact domain
$\Omega\subseteq\mathcal M$.  For probability laws $\alpha$ and $\beta$ on
$\Omega$, define
\begin{equation}
  \operatorname{OT}_\varepsilon(\alpha,\beta)
  :=\inf_{\pi\in\Pi(\alpha,\beta)}
  \left\{
    \int c(x,y)\,d\pi(x,y)
    +\varepsilon\operatorname{KL}(\pi\,\|\,\alpha\otimes\beta)
  \right\},
  \label{eq:entropic-ot}
\end{equation}
and the debiased Sinkhorn divergence
\begin{equation}
  \mathcal S_\varepsilon(q,p)
  =\operatorname{OT}_\varepsilon(q,p)
  -\frac12\operatorname{OT}_\varepsilon(q,q)
  -\frac12\operatorname{OT}_\varepsilon(p,p).
  \label{eq:sinkhorn-divergence}
\end{equation}
We assume that the optimal dual potentials exist and depend smoothly on the
marginals; these are the standard conditions for the envelope step below
\citep{cuturi2013sinkhorn,feydy2019sinkhorn}.

\begin{proof}[Derivation]
Let $f_{\alpha,\beta}^\varepsilon$ and
$r_{\alpha,\beta}^\varepsilon$ be optimal potentials for the first and second
marginals.  The entropic dual is
\begin{align*}
  \operatorname{OT}_\varepsilon(\alpha,\beta)
  =\sup_{f,r}\bigg\{
    &\int f\,d\alpha+\int r\,d\beta\\
    &-\varepsilon\iint
    \left[
      e^{(f(x)+r(y)-c(x,y))/\varepsilon}-1
    \right]d\alpha(x)d\beta(y)
  \bigg\}.
\end{align*}
Choose the soft-$c$-transform representatives of the optimal potentials.  At
the optimum, variation with respect to $f$ and $r$ normalizes the Gibbs factor
in each marginal:
\begin{equation}
  \int e^{(f(x)+r(y)-c(x,y))/\varepsilon}\,d\beta(y)=1,
  \qquad
  \int e^{(f(x)+r(y)-c(x,y))/\varepsilon}\,d\alpha(x)=1.
  \label{eq:sinkhorn-first-normalization}
\end{equation}
These representatives extend the normalization to every query point where the
displayed integrals are finite.  The envelope theorem now gives, for admissible
zero-mass perturbations $\sigma$ and $\tau$,
\[
  D_1\operatorname{OT}_\varepsilon(\alpha,\beta)[\sigma]
  =\int f_{\alpha,\beta}^\varepsilon\,d\sigma,
  \qquad
  D_2\operatorname{OT}_\varepsilon(\alpha,\beta)[\tau]
  =\int r_{\alpha,\beta}^\varepsilon\,d\tau.
\]
Indeed, differentiating the dual objective also produces a Gibbs-normalization
term, and Eq.~\eqref{eq:sinkhorn-first-normalization} makes that term zero.

In the self-term $\operatorname{OT}_\varepsilon(q,q)$, the same perturbation
acts on both marginals.  For symmetric $c$, choose the symmetric normalization
$f_{q,q}^\varepsilon=r_{q,q}^\varepsilon$.  Therefore
\[
  D\operatorname{OT}_\varepsilon(q,q)[\sigma]
  =\int(f_{q,q}^\varepsilon+r_{q,q}^\varepsilon)\,d\sigma
  =2\int f_{q,q}^\varepsilon\,d\sigma.
\]
This factor of two cancels the factor $1/2$ in
Eq.~\eqref{eq:sinkhorn-divergence}, giving the first variation
\begin{equation}
  \psi_{p,q}^{\mathrm{Sink}}(x)
  =f_{q,p}^\varepsilon(x)-f_{q,q}^\varepsilon(x),
  \label{eq:sinkhorn-first-variation}
\end{equation}
up to an additive constant.  Its negative Riemannian gradient is
\begin{equation}
  V_{p,q}^{\mathrm{Sink}}(x)
  =\operatorname{grad}_{\mathsf g}f_{q,q}^\varepsilon(x)
  -\operatorname{grad}_{\mathsf g}f_{q,p}^\varepsilon(x).
  \label{eq:sinkhorn-velocity}
\end{equation}

The soft $c$-transform writes the first potential as
\[
  f_{\alpha,\beta}^\varepsilon(x)
  =-\varepsilon\log\int
  e^{(r_{\alpha,\beta}^\varepsilon(y)-c(x,y))/\varepsilon}
  \,d\beta(y).
\]
Differentiating this expression shows that
\begin{equation}
  \operatorname{grad}_{\mathsf g}f_{\alpha,\beta}^\varepsilon(x)
  =\int\operatorname{grad}_{\mathsf g,x}c(x,y)\,
    \pi_{\alpha,\beta}^\varepsilon(dy\mid x),
  \label{eq:sinkhorn-potential-gradient}
\end{equation}
where the conditional entropic plan is precisely the normalized Gibbs weight.
For $c(x,y)=\tfrac12d_{\mathcal M}^2(x,y)$, provided the conditional plans put
no mass on the cut locus,
$\operatorname{grad}_{\mathsf g,x}c(x,y)=-\operatorname{Log}_x(y)$.  Hence
\begin{equation}
  V_{p,q}^{\mathrm{Sink}}(x)
  =\mathbb E_{\pi_{q,p}^\varepsilon(\cdot\mid x)}
    [\operatorname{Log}_x(Y)]
  -\mathbb E_{\pi_{q,q}^\varepsilon(\cdot\mid x)}
    [\operatorname{Log}_x(X')].
  \label{eq:sinkhorn-log-velocity}
\end{equation}
This proves the Sinkhorn column of Table~\ref{tab:objective-drifts}.
\end{proof}

A converged finite-sample Sinkhorn solve gives the exact field for the empirical
laws, while a truncated solve adds numerical approximation error.

\subsection{MMD first variation and intrinsic drift}
\label{app:proof:mmd-drift}

Let $\kappa$ be a symmetric differentiable kernel and define
$m_r^\kappa(x)=\int\kappa(x,z)\,dr(z)$.  We use
\begin{equation}
  \mathcal F_p^{\mathrm{MMD}}(q)
  :=\frac12\operatorname{MMD}_\kappa^2(q,p).
  \label{eq:mmd-objective}
\end{equation}

\begin{proof}[Derivation]
Expanding the objective gives
\[
  \mathcal F_p^{\mathrm{MMD}}(q)
  =\frac12\iint\kappa\,dq\,dq
   -\iint\kappa\,dq\,dp
   +\frac12\iint\kappa\,dp\,dp.
\]
For an admissible zero-mass perturbation $q_s=q+s\sigma$, bilinearity and
symmetry yield
\[
  \mathcal F_p^{\mathrm{MMD}}(q_s)
  =\mathcal F_p^{\mathrm{MMD}}(q)
   +s\int\bigl[m_q^\kappa(x)-m_p^\kappa(x)\bigr]\,d\sigma(x)
   +\frac{s^2}{2}\iint\kappa(x,x')\,d\sigma(x)d\sigma(x').
\]
The coefficient of $s$ is the first variation, hence
\begin{equation}
  \psi_{p,q}^{\mathrm{MMD}}(x)
  =m_q^\kappa(x)-m_p^\kappa(x).
  \label{eq:mmd-first-variation}
\end{equation}
Differentiating the two kernel means with respect to $x$ gives
\begin{equation}
  \operatorname{grad}_{\mathsf g}\psi_{p,q}^{\mathrm{MMD}}(x)
  =\mathbb E_{X'\sim q}
    [\operatorname{grad}_{\mathsf g,x}\kappa(x,X')]
  -\mathbb E_{Y\sim p}
    [\operatorname{grad}_{\mathsf g,x}\kappa(x,Y)].
  \label{eq:mmd-covector}
\end{equation}
Negating this gradient gives
\begin{equation}
  V_{p,q}^{\mathrm{MMD}}(x)
  =\mathbb E_{Y\sim p}
    [\operatorname{grad}_{\mathsf g,x}\kappa(x,Y)]
  -\mathbb E_{X'\sim q}
    [\operatorname{grad}_{\mathsf g,x}\kappa(x,X')],
  \label{eq:mmd-velocity}
\end{equation}
which is the MMD column of Table~\ref{tab:objective-drifts}.
\end{proof}

If $\kappa$ is characteristic, the objective is zero exactly when $q=p$.
This identifies the global minimum, although it does not rule out other
stationary particle configurations.

\subsection{KL first variation, Fokker--Planck equation, and dissipation}
\label{app:proof:kl-drift}

Let $\Omega$ be a compact connected domain with smooth boundary, and let $p$
and $q$ be positive smooth densities:
\begin{equation}
  p,q\in C^2(\overline\Omega),
  \qquad p(x),q(x)>0.
  \label{eq:kl-density-assumption}
\end{equation}
On a noncompact domain, the same calculation requires the corresponding
integrability and decay assumptions.
With $d\lambda=d\mathrm{vol}_{\mathsf g}$, define
\begin{equation}
  \mathcal F_p^{\mathrm{KL}}(q)
  :=\operatorname{KL}(q\,\|\,p)
  =\int_\Omega q(x)\log\frac{q(x)}{p(x)}\,d\lambda(x).
  \label{eq:kl-objective}
\end{equation}

\begin{proof}[Derivation]
Let $q_s=q+sr$ be an admissible perturbation with
$\int_\Omega r\,d\lambda=0$ and $q_s>0$ for small $s$.  Direct differentiation
gives
\[
  \left.\frac{d}{ds}\operatorname{KL}(q_s\|p)\right|_{s=0}
  =\int_\Omega r(x)
    \left(\log\frac{q(x)}{p(x)}+1\right)d\lambda(x).
\]
Therefore
\begin{equation}
  \psi_{p,q}^{\mathrm{KL}}(x)
  =\log\frac{q(x)}{p(x)}+1,
  \label{eq:kl-first-variation}
\end{equation}
and the constant disappears after taking the spatial gradient:
\begin{equation}
  V_{p,q}^{\mathrm{KL}}(x)
  =-\operatorname{grad}_{\mathsf g}\log\frac{q(x)}{p(x)}
  =\operatorname{grad}_{\mathsf g}\log p(x)
   -\operatorname{grad}_{\mathsf g}\log q(x).
  \label{eq:kl-velocity}
\end{equation}

Let $q_t$ be a smooth positive density path satisfying the same assumptions
and the no-flux condition below, and put $\phi_t=\log(q_t/p)$.  Substituting
$V_{p,q_t}=-\operatorname{grad}_{\mathsf g}\phi_t$ into the continuity
equation gives
\begin{equation}
  \partial_tq_t
  =\operatorname{div}_{\mathsf g}
    (q_t\operatorname{grad}_{\mathsf g}\phi_t)
  =\Delta_{\mathsf g}q_t
   -\operatorname{div}_{\mathsf g}
    (q_t\operatorname{grad}_{\mathsf g}\log p).
  \label{eq:kl-fokker-planck}
\end{equation}
On a domain with boundary, impose the no-flux condition
\begin{equation}
  \mathsf g
  (q_t\operatorname{grad}_{\mathsf g}\phi_t,n_{\mathsf g})=0
  \qquad\text{on }\partial\Omega.
  \label{eq:kl-no-flux}
\end{equation}
It preserves total mass and removes the boundary term in integration by parts.
Consequently,
\[
  \frac{d}{dt}\operatorname{KL}(q_t\|p)
  =-\int_\Omega q_t
    \left\|\operatorname{grad}_{\mathsf g}
      \log\frac{q_t}{p}\right\|_{\mathsf g}^2d\lambda
  \leq0.
\]
If this dissipation is zero, connectedness implies
$\log(q_t/p)$ is constant; normalization then gives $q_t=p$.
\end{proof}

This calculation applies to positive densities.  The empirical KL branch uses
the Parzen-smoothed approximation in Eq.~\eqref{eq:parzen-kl-field} because
atomic empirical laws do not have ordinary density scores.

\subsection{Retraction consistency}
\label{app:proof:finite-update}

\begin{proof}[Proof of Proposition~\ref{prop:retraction-consistency}]
Fix $x$ and write $v=V_{p,q}(x)$.  In a local chart $\varphi$ around $x$, the
coordinate curves of $\mathcal R_x(\eta v)$ and
$\operatorname{Exp}_x(\eta v)$ both equal $\varphi(x)$ at $\eta=0$ and both
have derivative $D\varphi_x[v]$.  Taylor expansion therefore gives
\[
  \varphi(\mathcal R_x(\eta v))
  -\varphi(\operatorname{Exp}_x(\eta v))=O(\eta^2).
\]
The Riemannian and coordinate distances are locally equivalent, so
\[
  d_{\mathcal M}\!\left(
    \mathcal R_x(\eta v),
    \operatorname{Exp}_x(\eta v)
  \right)=O(\eta^2),
\]
which proves Eq.~\eqref{eq:general-retraction-consistency}.
\end{proof}

\begin{proof}[Proof of Proposition~\ref{prop:simplex-retraction}]
It is enough to work with one block.  For
$a\in\Delta_\circ^{d-1}$ and $u\in T_a\Delta_\circ^{d-1}$, write
\[
  [\mathcal R_a(u)]_i
  =\frac{a_i e^{u_i/a_i}}{\sum_j a_j e^{u_j/a_j}}.
\]
The denominator is strictly positive, so the map is smooth.  Every component
is positive and the components sum to one.  At $u=0$ the map returns $a$.
Along $tu$, the numerator derivative at $t=0$ is $u_i$, while the denominator
derivative is $\sum_j u_j=0$.  Hence
\[
  \left.\frac{d}{dt}\mathcal R_a(tu)\right|_{t=0}=u,
\]
so $\mathcal R$ is a retraction.

For a covector $b$, Eq.~\eqref{eq:fr-cometric-components} gives
\[
  \frac{[G_{\mathrm{FR}}(a)b]_i}{a_i}
  =b_i-\sum_j a_jb_j.
\]
The common centering term cancels after normalization, yielding
\[
  [\mathcal R_a(\eta G_{\mathrm{FR}}(a)b)]_i
  =\frac{a_i e^{\eta b_i}}{\sum_j a_j e^{\eta b_j}},
\]
which is Eq.~\eqref{eq:simplex-retraction-covector} and is unchanged by adding
a constant to all entries of $b$.

Finally, on a compact simplex interior, bounded $b$ implies that
$G_{\mathrm{FR}}(a)b$ stays in a bounded tangent set.  Uniform Taylor
expansion therefore gives
\[
  \mathcal R_a(\eta G_{\mathrm{FR}}(a)b)
  =a+\eta G_{\mathrm{FR}}(a)b+O(\eta^2).
\]
This proves Eq.~\eqref{eq:simplex-retraction-first-order}.  Applying
Proposition~\ref{prop:retraction-consistency} blockwise and using the product
distance in Eq.~\eqref{eq:fr-distance} proves
Eq.~\eqref{eq:simplex-retraction-consistency}; compactness makes the derivative
and metric-equivalence bounds uniform.
\end{proof}

\section{Limitations}
\label{app:limitations}

\paragraph{Experimental scale.}
Our empirical scope is limited by the available compute.  The Earth benchmark
and Binary-MNIST are relatively small.  On discretized CIFAR-10, we evaluate
only $32\times32$ images.
The language study is restricted
to LM1B, length-128 sequences, and a single 139M-parameter generator
initialized from a released FLM checkpoint.  We therefore do not establish
scaling to high-resolution images, longer contexts, larger backbones, or
additional language domains.

\paragraph{Population theory versus the training algorithm.}
The dissipation result applies to an exact, sufficiently smooth population
flow under the stated chain-rule and boundary assumptions.  It does not prove
descent after every finite empirical update, convergence of stochastic neural
optimization, or exact recovery of the target law by the amortized generator.
The retraction guarantee is also local and first-order, so finite step sizes
introduce discretization error.  We further leave well-posedness, convergence
rates, and global convergence of the population flow to future work.

\paragraph{Finite-sample and boundary approximations.}
MMD and Sinkhorn use empirical plug-in fields, and truncated Sinkhorn solves
add numerical error.  The practical KL branch instead estimates a
Parzen-smoothed score field.  In addition, the
Fisher--Rao analysis is stated on compact subsets of the open simplex, whereas
categorical observations lie on its boundary.  The clipped projected updates
used for Binary-MNIST and LM1B are practical boundary extensions and are not
covered by the compact-interior guarantees.  Discretized CIFAR-10 instead uses the exponentiated
retraction, but its hard one-hot data also lie on the boundary.  We also do not prove that
improving a relaxed-space objective must improve metrics after argmax decoding.

\end{document}